\newtheorem{example}{Example}
\newtheorem{theorem}{Theorem}
\newtheorem{definition}{Definition}
\newtheorem{proposition}{Proposition}
\DeclareMathOperator*{\argmax}{arg\,max}
\title{Towards Unifying Logical Entailment and Statistical Estimation}
\author{Hiroyuki Kido
\affiliations
Cardiff University
\emails
KidoH@cardiff.ac.uk
}
\begin{document}
\maketitle

%%%%%%%%%%%%%%%%%%%%%%%%%%%%%%%%%%%%%%%%%%%%%%%%%%%%%%%%%%%%%%%%%%%%%%%%%%%%%%%%%%%%%%%%%%%%%%%%%%%%%%%%%%%%%%%%%%%%%%%%%%%%%%%%%%%%%%%%%%%%%%%%%%%%%%%%%%%%%%%%%%%%%%%%%%%%%%%%%%%%%%%%%%%%%%%%%%%%%%%%%%%%%%%%%%%%%%%%%%%%%%%%%%%%%%%%%%%%%%%%%%%%%%%
\begin{abstract}
This paper gives a generative model of the interpretation of formal logic for data-driven logical reasoning. The key idea is to represent the interpretation as likelihood of a formula being true given a model of formal logic. Using the likelihood, Bayes' theorem gives the posterior of the model being the case given the formula. The posterior represents an inverse interpretation of formal logic that seeks models making the formula true. The likelihood and posterior cause Bayesian learning that gives the probability of the conclusion being true in the models where all the premises are true. This paper looks at statistical and logical properties of the Bayesian learning. It is shown that the generative model is a unified theory of several different types of reasoning in logic and statistics.
\end{abstract}

%%%%%%%%%%%%%%%%%%%%%%%%%%%%%%%%%%%%%%%%%%%%%%%%%%%%%%%%%%%%%%%%%%%%%%%%%%%%%%%%%%%%%%%%%%%%%%%%%%%%%%%%%%%%%%%%%%%%%%%%%%%%%%%%%%%%%%%%%%%%%%%%%%%%%%%%%%%%%%%%%%%%%%%%%%%%%%%%%%%%%%%%%%%%%%%%%%%%%%%%%%%%%%%%%%%%%%%%%%%%%%%%%%%%%%%%%%%%%%%%%%%%%%%
\section{Introduction}
Thanks to big data and computational power available today, Bayesian statistics plays an important role in various fields such as neuroscience, cognitive science and artificial intelligence (AI). Bayesian brain hypothesis \cite{knill:04}, free-energy principle \cite{friston:10} and predictive coding \cite{hohwy:08} argue that probabilistic reasoning using Bayes' theorem or its approximation explains some higher-order cognitive functions of the cerebral cortex such as perception, action and learning. The common idea is that the brain is a generative model that actively predicts and perceives the world using the belief of states of the world. Bayes' theorem here defines how sensory inputs such as sight, sound, smell, taste and touch update the belief.
\par
% Formal logic provides an entailment, which defines what conclusions follow logically from premises.
Formal logic concerns the laws of human rational thought. The Bayesian brain hypothesis would therefore result in another hypothesis that there is a Bayesian algorithm and data-structure for logical reasoning. This hypothesis is important for the following reasons. First, it has a potential to cause a mathematical model to explain how the human brain performs logical reasoning. Second, the existence of such a model supports the Bayesian brain hypothesis in terms of formal logic. Third, such a model gives a way to critically assess the existing formalisms of logical reasoning. Nevertheless, few research has focused on reformulating logical reasoning in terms of Bayesian perspectives. Bayesian networks \cite{pearl:88}, probabilistic relational models (PRM) \cite{friedman:99}, probabilistic logic programming (PLP) \cite{sato:95} and Markov logic networks (MLN) \cite{richardson:06} are a few exceptions intrinsically relating to Bayesian inference. However, none of them aims to model the process by which data about states of the world generate models of formal logic and then the models generate the truth values of logical formulae. Such a model should give a unified way to deal with the tasks of the statistics and logic shown in Figure \ref{fig:LandS}. This is an important problem because various challenging AI problems such as grounding, frame problems, knowledge acquisition bottleneck, commonsense reasoning and contextual adaptation, come from their disconnection.
%
%Bayesian account of logical reasoning is important. First, it has a potential to be a mathematical model to explain how the human brain performs logical reasoning. Second, it supports the Bayesian brain hypothesis in an analytical way in terms of logic. Third, it gives a way to critically assess the existing formalisms of logical reasoning.
%
% Here, logic concerns a logical consequence relation between sentences whereas statistics concerns whether sentences reflect aspects of the real world.
\par
Formal logic considers an interpretation on each model (denoted by $m$), which represents a state of the world. The interpretation is a function that maps each formula (denoted by $\alpha$) to a truth value, which represents knowledge of the world. Our idea is to give a generative model of the interpretation and use it to probabilistically generate knowledge from data about states of the world. The most basic theoretical idea is to represent the interpretation as likelihood $p(\alpha|m)$. Using the likelihood, Bayes' theorem gives posterior $p(m|\alpha)$, which represents an inverse interpretation that gives the probability that the model making formula $\alpha$ true is $m$. The likelihood and posterior cause Bayesian learning $p(\alpha|\beta)=\sum_{m}p(\alpha|m)p(m|\beta)$, which gives the probability of the formula $\alpha$ being true in the models where the formula $\beta$ is true. This paper studies statistical and logical properties of the Bayesian learning. 
\par
This paper is organised as follows. Section 2 introduces a generative model for logical consequence relations. Section 3 shows logical and statistical correctness of the generative model. Section 4 concludes with a discussion of limitations and future work.
%\par
%This paper has the following three contributions.
%
%First, it proposes the Bayesian entailment hypothesis as a consequence of the Bayesian brain hypothesis. It contributes to launch a new interdisciplinary project across neuroscience and formal logic.
%
%Second, it introduces a generative model unifying classical reasoning, paraconsistent reasoning, counterfactual reasoning and nonmonotonic reasoning. This contributes not only to justifying the Bayesian entailment hypothesis but also to advocating the Bayesian brain hypothesis. 
%
%Third, this paper empirically shows the reasonable learning performance of the generative model in classification tasks. This contributes to provide an approach to unify reasoning and learning, which remains an important open question in the field.
%
%\par
%summarises this paper.
%This paper is organised as follows. Section 2 introduces a generative model for logical consequence relations. Section 3 shows the logical and machine learning correctness of the generative model. Section 4 concludes with a discussion of related work.
%%%%%%%%%
\begin{figure}[t]
\begin{center}
 \includegraphics[scale=0.33]{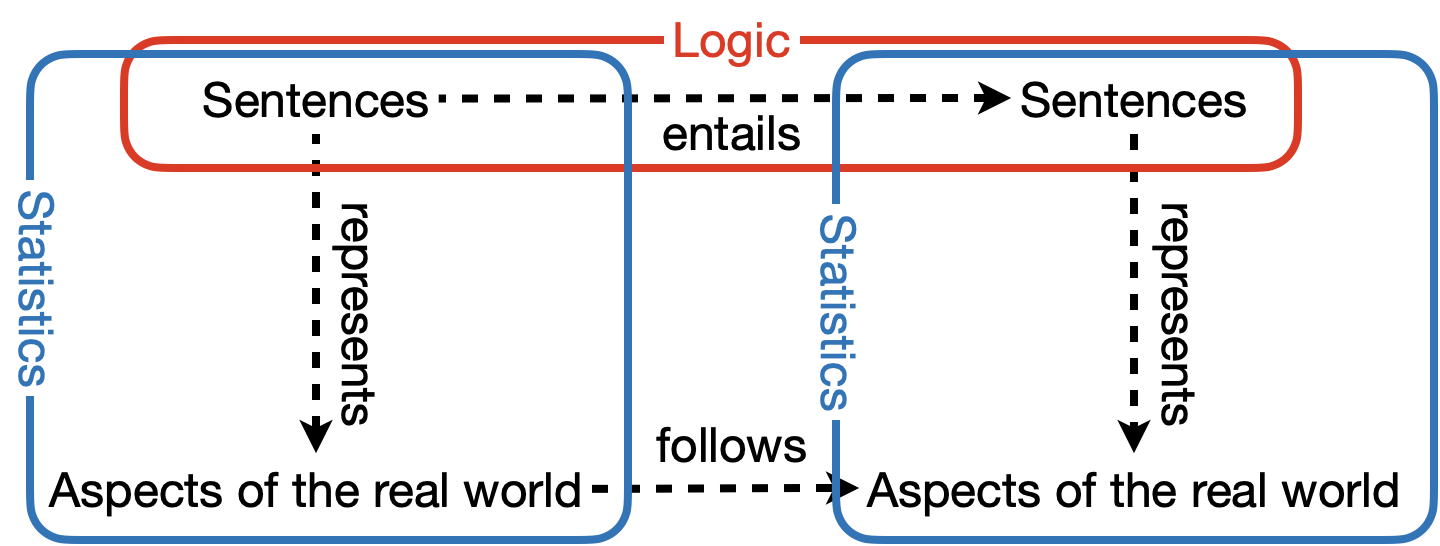}
 \caption{Logic concerns if sentences follow logically from other whereas statistics concerns if they reflect aspects of the real world.}
 \label{fig:LandS}
\end{center}
\end{figure}
%%%%%%%%%
%%%%%%%%%%%%%%%%%%%%%%%%%%%%%%%%%%%%%%%%%%%%%%%%%%%%%%%%%%%%%%%%%%%%%%%%%%%%%%%%%%%%%%%%%%%%%%%%%%%%%%%%%%%%%%%%%%%%%%%%%%%%%%%%%%%%%%%%%%%%%%%%%%%%%%%%%%%%%%%%%%
%\section{Methods}
\section{Method}
%Let $L$ be a logical language constructed with a finite set of atomic formulae. The language is logical in the sense that it is defined as usual using typical logical connectives such as $\lnot$, $\land$, $\lor$, $\rightarrow$ and $\leftrightarrow$. We assume no further syntax, e.g., propositional or first-order language, in order to keep our formalism as general as possible.
%
%We assume that ${\cal D}$ determines the possible states of the world and that each state of the world in ${\cal D}$ occurs with the same probability. 
%
Let ${\cal D}=\{d_{1},d_{2},...,d_{K}\}$ be a multiset of data about states of the world. $D$ is a random variable whose realisations are data in ${\cal D}$. For all data $d_{k}\in{\cal D}$, we define the probability of $d_{k}$, as follows.
\begin{eqnarray*}
p(D=d_{k})=\frac{1}{K}
\end{eqnarray*}
%and $p(D)$ is a uniform categorical distribution, i.e., $p(D=d_{k})=1/K$, for all $d_{k}\in{\cal D}$.
%
%Let $L$ be a propositional or first-order language. For the sake of simplicity, we assume that $L$ has no function symbol or open formula. ${\cal D}=\{d_{1},d_{2},...,d_{K}\}$ represents a set of data about states of the world. We assume that ${\cal D}$ determines possible states of the world and that each state of the world in ${\cal D}$ occurs with the same probability. $D$ is a random variable whose realisations are data in ${\cal D}$. $p(D)$ is a uniform categorical distribution, i.e., $p(D=d_{k})=1/K$, for all $d_{k}\in{\cal D}$.
%
$L$ represents a propositional or first-order language. For the sake of simplicity, we assume no function symbol or open formula in $L$. ${\cal M}=\{m_{1},m_{2},...,m_{N}\}$ is a set of models in formal logic. ${\cal D}$ is assumed to be complete with respect to ${\cal M}$, and thus each data in ${\cal D}$ belongs to a single model in ${\cal M}$. $m$ is a function that maps each data to such a single model. $K_{n}$ denotes the number of data that belongs to $m_{n}$, i.e., $K_{n}=|\{d_{k}\in{\cal D}|m_{n}=m(d_{k})\}|$ where $|X|$ for set $X$ denotes the cardinality of $X$. $M$ is a random variable whose realisations are models in ${\cal M}$. For all models $m_{n}\in{\cal M}$ and data $d_{k}\in{\cal D}$, we define the conditional probability of $m_{n}$ given $d_{k}$, as follows.
\begin{eqnarray*}
&&p(M=m_{n}|D=d_{k})=
\begin{cases}
1 & \text{if } m_{n}=m(d_{k})\\
0 & \text{otherwise }
\end{cases}
\end{eqnarray*}
Formal logic considers an interpretation on each model. The interpretation is a function that maps each formula to a truth value, which represents knowledge of the world. We here introduce parameter $\mu\in[0,1]$ to represent the extent to which each model is taken for granted in the interpretation. Concretely, $\mu$ denotes the probability that a formula is interpreted as being true (resp. false) in a model where it is true (resp. false). $1-\mu$ is therefore the probability that a formula is interpreted as being true (resp. false) in a model where it is false (resp. true). We assume that each formula is a random variable whose realisations are 0 and 1, denoting false and true, respectively. For all models $m_{n}\in{\cal M}$ and formulae $\alpha\in L$, we define the conditional probability of each truth value of $\alpha$ given $m_{n}$, as follows.
\begin{eqnarray*}
&&p(\alpha=1|M=m_{n})=
\begin{cases}
\mu & \text{if } m_{n}\in\llbracket\alpha=1\rrbracket\\
1-\mu & \text{otherwise }
\end{cases}
\\
&&p(\alpha=0|M=m_{n})=
\begin{cases}
\mu & \text{if } m_{n}\in\llbracket\alpha=0\rrbracket\\
1-\mu & \text{otherwise }
\end{cases}
\end{eqnarray*}
Here, $\llbracket\alpha=1\rrbracket$ denotes the set of all models in which $\alpha$ is true, and $\llbracket\alpha=0\rrbracket$ the set of all models in which $\alpha$ is false. The above expressions can be simply written as a Bernoulli distribution with parameter $\mu\in[0,1]$, i.e.,
\begin{eqnarray*}
p(\alpha|M=m_{n})=\mu^{\llbracket\alpha\rrbracket_{m_{n}}}(1-\mu)^{1-\llbracket\alpha\rrbracket_{m_{n}}}.
\end{eqnarray*}
Here, $\llbracket\alpha\rrbracket_{m_{n}}$ is a function such that $\llbracket\alpha\rrbracket_{m_{n}}=1$ if $m_{n}\in\llbracket\alpha\rrbracket$ and $\llbracket\alpha\rrbracket_{m_{n}}=0$ otherwise. Recall that $\alpha$ is a random variable, and thus $\llbracket\alpha\rrbracket_{m_{n}}$ is either $\llbracket\alpha=0\rrbracket_{m_{n}}$ or $\llbracket\alpha=1\rrbracket_{m_{n}}$.
\par
In classical logic, given a model, the truth value of each formula is independently determined. In probability theory, this means that the truth values of any two formulae $\alpha_{1}$ and $\alpha_{2}$ are conditionally independent given a model $m_{n}$, i.e., $p(\alpha_{1},\alpha_{2}|M=m_{n})=p(\alpha_{1}|M=m_{n})p(\alpha_{2}|M=m_{n})$. Note that the conditional independence holds not only for atomic formulae but for compound formulae as well.\footnote{In contrast, independence, i.e., $p(\alpha_{1},\alpha_{2})= p(\alpha_{1})p(\alpha_{2})$, holds only for atomic formulae.} Let $\Delta=\{\alpha_{1},\alpha_{2},...,\alpha_{J}\}$ be a multiset of $J$ formulae. We thus have
\begin{eqnarray*}
p(\Delta|M=m_{n})=\prod_{j=1}^{J}p(\alpha_{j}|M=m_{n}).
\end{eqnarray*}
\par
Thus far, we have defined $p(D)$ and $p(M|D)$ as categorical distributions and $p(\Delta|M)$ as Bernoulli distributions with parameter $\mu$. Given a value of the parameter $\mu$, they provide the full joint distribution over all of the random variables, i.e. $p(\Delta,M,D)$. We call $\{p(\Delta|M,\mu), p(M|D), p(D)\}$ a logical model. In sum, the logical model defines a data-driven interpretation by which the truth values of formulae are logically interpreted and probabilistically generated from models. The models are also probabilistically generated from data observed from the real world. The logical model meets the following important properties.
\begin{proposition}\label{kolmogorov}
%The probability of a truth value of a formula satisfies Kolmogorov's axioms.
The logical model satisfies Kolmogorov's axioms.
\end{proposition}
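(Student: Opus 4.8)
The plan is to check the three Kolmogorov axioms --- non-negativity, unit total mass, and additivity --- for the joint mass function $p(\Delta,M,D)$ that the logical model induces, and then to observe that every other probability used in the paper (marginals such as $p(\alpha)$, conditionals such as $p(m|\alpha)$, and the Bayesian learning $p(\alpha|\beta)=\sum_m p(\alpha|m)p(m|\beta)$) is obtained from this joint by summation and by division by a positive marginal, operations that preserve the axioms. Since every random variable here ranges over a finite set, additivity reduces to the statement that the probability of an event is the sum of the atomic probabilities it contains, so the real content is to exhibit a non-negative atomic mass function summing to one.

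First I would treat the three factors $p(D)$, $p(M|D)$ and $p(\Delta|M,\mu)$ in turn. The factor $p(D)$ is the uniform categorical mass $1/K$, hence non-negative with $\sum_k p(D=d_k)=1$. The factor $p(M|D=d_k)$ puts unit mass on the unique model $m(d_k)$; here I would invoke the completeness assumption, which is exactly what makes $m\colon{\cal D}\to{\cal M}$ a well-defined function, so that $p(M|D=d_k)\ge 0$ and $\sum_n p(M=m_n|D=d_k)=1$ for every $d_k$. The factor $p(\alpha|M=m_n)=\mu^{\llbracket\alpha\rrbracket_{m_n}}(1-\mu)^{1-\llbracket\alpha\rrbracket_{m_n}}$ is non-negative because $\mu\in[0,1]$, and $p(\alpha=0|m_n)+p(\alpha=1|m_n)=\mu+(1-\mu)=1$ because classical bivalence makes $\{\llbracket\alpha=0\rrbracket,\llbracket\alpha=1\rrbracket\}$ a partition of ${\cal M}$, so exactly one of the two cases in the defining equations applies to $m_n$. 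Extending to $\Delta=\{\alpha_1,\dots,\alpha_J\}$ through the stated product rule, $\sum_{\Delta}p(\Delta|M=m_n)=\prod_{j=1}^{J}\bigl(\sum_{\alpha_j}p(\alpha_j|M=m_n)\bigr)=1$, the outer sum being over the $2^J$ truth-value assignments.

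Then I would assemble $p(\Delta,M,D)=p(\Delta|M,\mu)\,p(M|D)\,p(D)$: it is non-negative as a product of non-negative factors, and summing out the variables in the order $\Delta$, then $M$, then $D$ --- each inner sum equalling one by the previous step --- gives $\sum_{\Delta,M,D}p(\Delta,M,D)=1$, the interchange of finite sums being unproblematic. Setting $p(E)=\sum_{(\Delta,m,d)\in E}p(\Delta,m,d)$ for an event $E$ then defines a set function that is non-negative, assigns $1$ to the whole (finite) sample space, and is additive over disjoint unions; finite additivity is immediate and countable additivity follows since only finitely many atoms carry positive mass.

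The only genuinely delicate point is the normalisation of the Bernoulli factor: one must be certain that for each formula $\alpha$ and each model $m_n$ exactly one of $m_n\in\llbracket\alpha=1\rrbracket$ and $m_n\in\llbracket\alpha=0\rrbracket$ holds, so that the two-line definition of $p(\alpha|M=m_n)$ is self-consistent and its two values sum to one. This is just the classical semantics of $L$ --- every sentence is true or false, and not both, in every model --- together with the standing assumption that $L$ contains no open formulae, so there are no free-variable assignments to complicate the picture. Everything else is bookkeeping over finite sums.
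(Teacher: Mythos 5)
Your proof is correct and is the standard factor-by-factor verification one would expect; the paper states Proposition~\ref{kolmogorov} without any proof of its own, so there is nothing to compare against, but your argument fills that gap soundly. The one genuinely non-trivial point---that classical bivalence makes exactly one case of the two-case definition of $p(\alpha|M=m_n)$ apply, so the two values sum to one and the product over $\Delta$ normalises over all $2^{J}$ truth-value assignments---is precisely the point you single out and handle correctly.
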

%\begin{proof}
%See Appendices.
%\end{proof}
%
%The next proposition shows that the logical model is sound in terms of logical negation.
%
\begin{proposition}\label{negation}
Let $\alpha\in L$. $p(\alpha=0)=p(\neg\alpha=1)$ holds.
\end{proposition}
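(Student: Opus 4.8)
The plan is to reduce the identity to a model-by-model comparison of likelihoods and then marginalise. First I would expand both sides using the law of total probability over the models and over the data. Writing $p(M=m_{n})=\sum_{k}p(M=m_{n}|D=d_{k})p(D=d_{k})=K_{n}/K$, we obtain $p(\alpha=0)=\sum_{n}p(\alpha=0|M=m_{n})\,K_{n}/K$ and likewise $p(\neg\alpha=1)=\sum_{n}p(\neg\alpha=1|M=m_{n})\,K_{n}/K$. Hence it suffices to show $p(\alpha=0|M=m_{n})=p(\neg\alpha=1|M=m_{n})$ for every $m_{n}\in{\cal M}$, after which summing against the common weights $K_{n}/K$ finishes the argument.

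The only place where logic enters is this per-model step, and it rests on the semantics of classical negation: $\llbracket\alpha=0\rrbracket=\llbracket\neg\alpha=1\rrbracket$, i.e.\ a model falsifies $\alpha$ exactly when it satisfies $\neg\alpha$. Substituting this set identity into the defining case split for the conditional probabilities, $p(\alpha=0|M=m_{n})$ equals $\mu$ precisely when $m_{n}\in\llbracket\neg\alpha=1\rrbracket$ and $1-\mu$ otherwise, which is verbatim the definition of $p(\neg\alpha=1|M=m_{n})$. So the two conditionals coincide on each model, independently of the value of $\mu$.

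I do not expect a genuine obstacle here; the content is the bookkeeping of the marginalisation together with the elementary fact $\llbracket\alpha=0\rrbracket=\llbracket\neg\alpha=1\rrbracket$. The one subtlety worth stating explicitly is that this uses $\neg\alpha$ being a formula of $L$ with the standard two-valued semantics, so that $\llbracket\neg\alpha\rrbracket_{m_{n}}=1-\llbracket\alpha\rrbracket_{m_{n}}$; once that is granted the equality is immediate and in fact holds for every parameter value $\mu\in[0,1]$.
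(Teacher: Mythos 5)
Your proof is correct and is essentially the argument the result rests on in the paper: the classical semantics of negation gives $\llbracket\alpha=0\rrbracket=\llbracket\neg\alpha=1\rrbracket$, hence $p(\alpha=0\mid M=m_{n})=p(\neg\alpha=1\mid M=m_{n})$ for every model and every $\mu\in[0,1]$, and marginalising over $M$ (and $D$) transfers the equality to the unconditional probabilities. Nothing is missing.
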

%\begin{proof}
%See Appendices.
%\end{proof}
%
In the following, we therefore replace $\alpha=0$ by $\lnot\alpha=1$ and then abbreviate $\lnot\alpha=1$ to $\lnot\alpha$. We also abbreviate $M=m_{n}$ to $m_{n}$ and $D=d_{k}$ to $d_{k}$. 
%Let us see an example in propositional logic.
%
\begin{table}[t]
\begin{minipage}[c]{.45\hsize}
\centering
\setlength{\tabcolsep}{0.5mm} 
\caption{Models and data.}
\label{ex:data}
\begin{tabular}{c|cc|c}
& $rain$ & $wet$ & data ${\cal D}$\\\hline
$m_{1}$ & $0$ & $0$ & $\times\times\times\times$\\
$m_{2}$ & $0$ & $1$ & $\times\times$\\
$m_{3}$ & $1$ & $0$ & $\times$\\
$m_{4}$ & $1$ & $1$ & $\times\times\times$
\end{tabular}
\end{minipage}
\begin{minipage}[c]{.45\hsize}
\centering
\setlength{\tabcolsep}{0.5mm} 
\caption{Likelihoods.}
\label{ex:likelihoods}
\begin{tabular}{c|cc}
& $p(rain|M)$ & $p(wet|M)$\\\hline
$m_{1}$ & $1-\mu$ & $1-\mu$\\
$m_{2}$ & $1-\mu$ & $\mu$\\
$m_{3}$ & $\mu$ & $1-\mu$\\
$m_{4}$ & $\mu$ & $\mu$
\end{tabular}
\end{minipage}
\end{table}
%
%\begin{figure}[t]
%\begin{center}
% \includegraphics[scale=0.37]{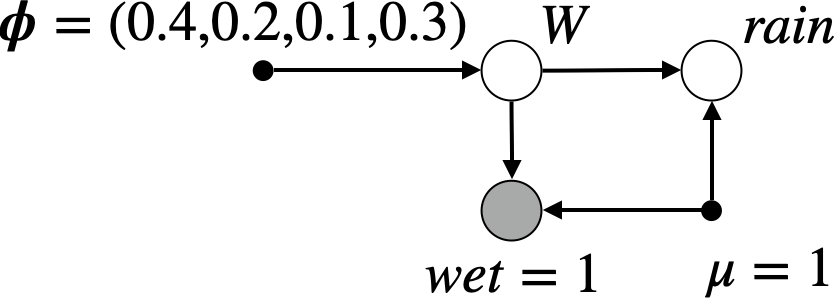}
% \caption{Instance of the logical model.}
%  \label{dependency2}
%\end{center}
%\end{figure}
%
%%%%%%%%%%%%%%%%%%%%%%%%%%%%%%%%%%%%%%%%%%%%%%%%%%%%%%%%%%%%%%%%%%%%%%%%%%%%%%%%%%
\begin{example}\label{ex:BE}%[Bayesian entailment]
Let $rain$ and $wet$ be two propositional symbols meaning `it is raining' and `the grass is wet,' respectively. Each row of Table \ref{ex:data} shows a different model, i.e., valuation. The last column shows how many data belongs to each model. Table \ref{ex:likelihoods} shows the likelihoods of the atomic propositions being true given a model. Given $\{p(\Delta|M,\mu=1), p(M|D), p(D)\}$, we have
%the probability of $rain$ given $wet$ is calculated as follows.
%
\begin{eqnarray*}
%&=&\frac{p(wet, rain\rightarrow wet)}{p(rain\rightarrow wet)}
&&p(rain|wet)\\
&&=\frac{\sum_{n=1}^{N}p(rain|m_{n})p(wet|m_{n})\sum_{k=1}^{K}p(m_{n}|d_{k})p(d_{k})}{\sum_{n=1}^{N}p(wet|m_{n})\sum_{k=1}^{K}p(m_{n}|d_{k})p(d_{k})}\\
&&=\frac{\sum_{n=1}^{N}p(rain|m_{n})p(wet|m_{n})\frac{K_{n}}{K}}{\sum_{n=1}^{N}p(wet|m_{n})\frac{K_{n}}{K}}\\
&&=\frac{(1-\mu)^{2}\frac{4}{10}+(1-\mu)\mu\frac{2}{10}+\mu(1-\mu)\frac{1}{10}+\mu^{2}\frac{3}{10}}{(1-\mu)\frac{4}{10}+\mu\frac{2}{10}+(1-\mu)\frac{1}{10}+\mu\frac{3}{10}}\\
&&=\frac{3}{2+3}=0.6.
%
%&=&\frac{0.3\mu^{2}+(0.2+0.1)\mu(1-\mu)+0.4(1-\mu)^{2}}{(0.2+0.3)\mu+(0.4+0.1)(1-\mu)}\\
%&=&\frac{0.4\mu^{2}-0.5\mu+0.4}{0.5}=0.6
\end{eqnarray*}
%
%Therefore, $\{wet\}\vapprox_{\theta} rain$ holds iff $\theta\leq 0.6$. Note that the same result is obtained using the right table in Figure \ref{fig:data}.\footnote{In this case, the prior probability has no effect on the calculation, as it disappears in the fraction reduction. Specifically, the presence of data is only needed to calculate a predictive probability.} Figure \ref{dependency2} shows the dependency over the random variables (denoted by circles) and the parameters (denoted by dots) used in this calculation.
\end{example}
%
%%%%%%%%%%%%%%%%%%%%%%%%%%%%%%%%%%%%%%%%%%%%%%%%%%%%%%%%%%%%%%%%%%%%%%%%%%%%%%%%%%%%%%%%%%%%%%%%%%%%%%%%%%%%%%%%%%%%%%%%%%%%%%%%%%%%%%%%%%%%%%%%%%%%%%%%%%%%%%%%%%%%%%%%%%%%%%%%%%%%%%%%%%%%%%%%%%%%%%%%%%%%%%%%%%%%%%%%
\begin{example}
Suppose that $L$ has only one 2-ary predicate symbol `$blames$' and that the Herbrand universe for $L$ has only two constants $\{a,b\}$. There are four ground atoms, $\{blames(a,a), blames(a,b)$, $blames(b,a)$, $blames(b,b)\}$, which result in $2^{4}=16$ possible models. Each row of Table \ref{tab:FOL} shows a different model and the last column shows the number of data that belongs to the model. Models without data are abbreviated from the table. Given $\{p(\Delta|M,\mu=1), p(M|D), p(D)\}$, we have
%
%今，エルブラン領域は二つの定数$\{a,b\}$からなり，言語$L$は二つの引数を取る述語記号$blames$しか持たないとする．次の4つの基底アトム$\{blames(a,a), blames(a,b)$, $blames(b,a)$, $blames(b,b)\}$が存在する．今，世界が取り得る世界の状態（モデル）は$2^{4}=16$通り存在する．各モデルにおいて異なる真偽値の割り当て（エルブラン解釈）が与えられる．表\ref{tab:FOL}の各行は異なるモデルを表し，最後のセルにはそのモデルに属するデータが$\times$で記されている．データが存在しないモデルは省略されている．このとき次式が成立する．
\begin{eqnarray*}
&&p(\forall x~ blames(x,a)|\exists x~blames(x,a))\\
&&=\frac{\sum_{n=1}^{16}\llbracket\forall x~ blames(x,a),\exists x~blames(x,a)\rrbracket_{m_{n}}\frac{K_{n}}{K}}{\sum_{n=1}^{16}\llbracket\exists x~ blames(x,a)\rrbracket_{m_{n}}\frac{K_{n}}{K}}\\
&&=\frac{K_{2}}{K_{1}+K_{2}}=\frac{3}{2+3}=0.6.
%
%
%
%&&p(\exists x~blames(x,a))\\
%&&=\sum_{n=1}^{16}p(\exists x~ blames(x,a)|m_{n})\sum_{k=1}^{10}p(m_{n}|d_{k})p(d_{k})\\
%&&=\sum_{n=1}^{16}\llbracket\exists x~ blames(x,a)\rrbracket_{m_{n}}\frac{K_{n}}{10}=\frac{K_{1}+K_{2}}{10}=\frac{5}{10}\\
%
%&&p(\forall x~ blames(x,a))\\
%&&=\sum_{n=1}^{16}p(\forall x~ blames(x,a)|m_{n})\sum_{k=1}^{10}p(m_{n}|d_{k})p(d_{k})\\
%&&=\sum_{n=1}^{16}\llbracket\forall x~ blames(x,a)\rrbracket_{m_{n}}\frac{K_{n}}{10}=\frac{K_{2}}{10}=\frac{3}{10}\\
%
%&&p(\forall x\exists y~ blames(x,y))\\
%&&=\sum_{n=1}^{16}p(\forall x\exists y~ blames(x,y)|m_{n})\sum_{k=1}^{10}p(m_{n}|d_{k})p(d_{k})\\
%&&=\sum_{n=1}^{16}\llbracket\forall x\exists y~ blames(x,y)\rrbracket_{m_{n}}\frac{K_{n}}{10}=\frac{\sum_{k=1}^{3}K_{k}}{10}=1
\end{eqnarray*}
%%%%%%%%%
\begin{table}[t]
\caption{Three predicate models and ten associated data.}
\centering
\begin{tabular}{c|cccc|c}
 & \multicolumn{4}{c|}{$blames$} &\\
 & $(a,a)$ & $(a,b)$ & $(b,a)$ & $(b,b)$ & data ${\cal D}$\\\hline
$m_{1}$ & 1 & 0 & 0 & 1 & $\times\times$\\
$m_{2}$ & 1 & 1 & 1 & 0 & $\times\times\times$\\
$m_{3}$ & 0 & 1 & 0 & 1 & $\times\times\times\times\times$\\
other & \multicolumn{4}{c|}{other} & no data
\end{tabular}
\label{tab:FOL}
\end{table}
%%%%%%%%%
\end{example}
%%%%%%
%
%
%
%%%%%%%%%%%%%%%%%%%%%%%%%%%%%%%%%%%%%%%%%%%%%%%%%%%%%%%%%%%%%%%%%%%%%%%%%%%%%%%%%%%%%%%%%%%%%%%%%%%%%%%%%%%%%%%%%%%%%%%%%%%%%%%%%%%%%%%%%%%%%%%%%%%%%%%%%%%%%%%%%%%%%%%%%%%%%%%%%%%%%%%%%%%%%%%%%%%%%%%%%%%%%%%%%%%%%%%%%%%%%%%%%%%%%%%%%%%%%%%%%%%%%%%%%%%%%%%%%%%%%%%%%%%%%%%%%%%%%%%%%%%%%%%%
\section{Correctness}
\subsection{Statistical Estimation}
Fenstad \cite{fenstad:67} says that the probability of a formula is the sum of the probabilities of the models where the formula is true. Let $\alpha\in L$ and $m_{n}\in{\cal M}$. When $L$ has no function symbol or open formula, the first Fenstad theorem can have the following simpler form, where $m_{n}\models\alpha$ represents $m_{n}$ satisfies $\alpha$.
%$\alpha$はある論理式を表し，$m_{n}$は$n$番目のモデルを表すとする．Fenstad\cite{fenstad:67}は論理式$\alpha$の確率はそれを真とするモデルの確率の和と等しくなければならないという．For the sake of simplicity, 関数記号と開論理式は扱わない．このとき，Fenstadの最初の表現定理は次の単純な表現を持ちうる．
%
\begin{eqnarray}\label{eq:fenstad}
p(\alpha)=\sum_{n=1: m_{n}\models\alpha}^{N}p(m_{n})
\end{eqnarray}
%
%ここで$m\models\alpha$は$m$ satisfies $\alpha$を表す．$M$はモデルを表す確率変数であり，$p(M)$は$\Phi=(\phi_{1},\phi_{2},...,\phi_{K})$をパラメータに持つカテゴリカル分布である．$\sum_{k=1}^{K}\phi_{k}=1$である．$K$はモデルの数であり，$\phi_{k}$は$k$番目のモデル$m_{k}$が生起する確率を表す，つまり$p(M=m_{k})=\phi_{k}$である．
\par
When one has no prior knowledge about the probability of models, the most frequently used method to estimate $p(M)$ only from data is maximum likelihood estimation, which is given as follows.
%${\cal D}=\{d_{1},d_{2},...,d_{K}\}$はデータの集合（データセット）を表す．$K$はデータ数である．我々は${\cal D}$が離散的でありかつ完全であると仮定する．すなわち任意のデータ$d_{k}\in{\cal D}$はある唯一のモデルに属する．$K_{n}$でモデル$m_{n}$に属するデータの数を表す．モデルについての事前知識を持たないとき，データセットからモデルの事前確率$p(M)$を統計的に求める典型的な方法は最尤推定法である．それは次式で表される．
%
\begin{eqnarray*}
p(M)=\argmax_{\Phi}p({\cal D}|\Phi)
%&=&\argmax_{\bm{\phi}}\sum_{d}\log p(d|\Phi)
\end{eqnarray*}
Assuming that each data is independent given $\Phi$, we have
%パラメータ$\Phi$に従ってデータが独立に生成されると仮定すると，次式が成り立つ．
%
\begin{eqnarray*}
&&p({\cal D}|\Phi)=\prod_{k=1}^{K}p(d_{k}|\Phi)\\%=\phi_{1}^{N_{1}}\phi_{2}^{N_{2}}\cdots\phi_{K}^{N_{K}}\\
&&=\phi_{1}^{K_{1}}\phi_{2}^{K_{2}}\cdots\phi_{N-1}^{K_{N-1}}(1-\phi_{1}-\phi_{2}-\cdots-\phi_{N-1})^{K_{N}}.
%=\prod_{k=1}^{K}\phi_{k}^{\#{\cal D}_{k}}
\end{eqnarray*}
$\Phi$ maximises the likelihood if and only if it maximises the log likelihood, which is given as follows.
\begin{eqnarray*}
L(\Phi)&=&K_{1}\log\phi_{1}+K_{2}\log\phi_{2}+\cdots+K_{N-1}\log\phi_{N-1}\\
&&+K_{N}\log(1-\phi_{1}-\phi_{2}-\cdots-\phi_{N-1})
\end{eqnarray*}
The maximum likelihood estimate is obtained by solving the following simultaneous equations, which are obtained by differentiating the log likelihood with respect to each $\phi_{n}(1\leq n\leq N-1)$.
%最尤推定量は次の連立方程式を解くことによって得られる，それは対数尤度を各$\phi_{n}(1\leq n\leq N-1)$で微分することで得られる．
%
\begin{eqnarray*}
\frac{\partial L(\Phi)}{\partial \phi_{n}}=\frac{K_{n}}{\phi_{n}}-\frac{K_{N}}{1-\phi_{1}-\phi_{2}-\cdots-\phi_{N-1}}=0
\end{eqnarray*}
The following is the solution to the simultaneous equations.
%\footnote{代入すれば解であることは容易に確かめられる}．
%
\begin{eqnarray*}
\Phi=\left(\frac{K_{1}}{K},\frac{K_{2}}{K},...,\frac{K_{N}}{K}\right)
\end{eqnarray*}
Therefore, the maximum likelihood estimate for the $n$-th model is just the ratio of the number of data in the model to the total number of data. Combining Equation (\ref{eq:fenstad}) and the maximum likelihood estimate, we have
%つまり，最尤推定量$\phi_{n}\in\Phi$は全データのうち$n$番目のモデルに属するデータの割合である．従って，Fenstadの定理と最尤推定により次の結果を得る．
%
\begin{eqnarray}\label{eq:fenstad+ML}
p(\alpha)=\sum_{n=1: m_{n}\models\alpha}^{N}\frac{K_{n}}{K}.
\end{eqnarray}
\par
Now, let $\{p(\Delta|M, \mu=1), p(M|D), p(D)\}$ be a logical model such that $\mu=1$. We show that both the Fenstad theorem and maximum likelihood estimation justify the logical model. The Fenstad theorem justifies the logical model because probabilistic inference on the logical model satisfies Equation (\ref{eq:fenstad}).
%さて，我々はFenstadの定理と最尤推定法がどちらも我々の方法を正当化することを示す．Let $\{p(\Delta|M, \mu=1), p(M|D), p(D)\}$とする．Fenstadの定理は我々の方法を正当化する，なぜなら我々の生成モデル上の確率推論は式（\ref{eq:fenstad}）に至るからである．
%
\begin{eqnarray*}
p(\alpha)&=&\sum_{n=1}^{N}p(\alpha,m_{n})=\sum_{n=1}^{N}p(\alpha|m_{n})p(m_{n})\\
&=&\sum_{n=1}^{N}\llbracket\alpha\rrbracket_{m_{n}}p(m_{n})=\sum_{n=1:m_{n}\in\llbracket\alpha\rrbracket}^{N}p(m_{n})
\end{eqnarray*}
Maximum likelihood estimation also justifies the logical model because probabilistic inference on the logical model satisfies Equation (\ref{eq:fenstad+ML}).
%また，最尤推定法も我々のアプローチを正当化する．なぜなら我々の生成モデル上の確率推論は式（\ref{eq:fenstad+ML}）に至るからである．
%
\begin{eqnarray}\label{eq:data-driven}
p(\alpha)&=&\sum_{n=1}^{N}\sum_{k=1}^{K}p(\alpha,m_{n},d_{k})\nonumber\\
&=&\sum_{n=1}^{N}p(\alpha|m_{n})\sum_{k=1}^{K}p(m_{n}|d_{k})p(d_{k})\nonumber\\
&=&\sum_{n=1}^{N}\llbracket\alpha\rrbracket_{m_{n}}\frac{K_{n}}{K}=\sum_{n=1: m_{n}\in\llbracket\alpha\rrbracket}^{N}\frac{K_{n}}{K}
%&=&\sum_{n=1}^{N}\llbracket\alpha\rrbracket_{m_{n}}p(m_{n})=\sum_{n=1:m_{n}\in\llbracket\alpha\rrbracket}^{N}p(m_{n})\\\\
%p(\alpha)&=&\sum_{n=1}^{N}p(\alpha|m_{n})p(m_{n})\nonumber\\
%&=&\sum_{k=1}^{K}p(\alpha|m(d_{k}))p(m(d_{k}))\nonumber\\
%&=&\frac{1}{K}\sum_{k=1}^{K}\llbracket\alpha\rrbracket_{m(d_{k})}\nonumber\\
%&=&\frac{1}{K}\left\{\llbracket\alpha\rrbracket_{m(d_{1})}+\llbracket\alpha\rrbracket_{m(d_{2})}+\cdots+\llbracket\alpha\rrbracket_{m(d_{K})}\right\}\nonumber\\
%&=&\frac{1}{K}\left\{K_{1}\llbracket\alpha\rrbracket_{m_{1}}+K_{2}\llbracket\alpha\rrbracket_{m_{2}}+\cdots+K_{N}\llbracket\alpha\rrbracket_{m_{N}}\right\}\nonumber\\
%&=&\frac{1}{K}\sum_{n=1}^{N}K_{n}\llbracket\alpha\rrbracket_{m_{n}}=\sum_{n=1: m_{n}\in\llbracket\alpha\rrbracket}^{N}\frac{K_{n}}{K}\\
\end{eqnarray}
%
%今，任意の$k$に関して，$p(m(d_{k}))=\frac{1}{K}$を仮定する．すなわちデータが属するモデルは等確率で生起する．次式はこの確率推論が式（\ref{eq:fenstad+ML}）を満たすことを示す．
%
%\begin{eqnarray*}
%p(\alpha)&=&\sum_{k=1}^{K}p(\alpha|m(d_{k}))p(m(d_{k}))=\frac{1}{K}\sum_{k=1}^{K}\llbracket\alpha\rrbracket_{m(d_{k})}\\
%&=&\frac{1}{K}\left\{\llbracket\alpha\rrbracket_{m(d_{1})}+\llbracket\alpha\rrbracket_{m(d_{2})}+\cdots+\llbracket\alpha\rrbracket_{m(d_{K})}\right\}\\
%&=&\frac{1}{K}\left\{K_{1}\llbracket\alpha\rrbracket_{m_{1}}+K_{2}\llbracket\alpha\rrbracket_{m_{2}}+\cdots+K_{N}\llbracket\alpha\rrbracket_{m_{N}}\right\}\\
%&=&\frac{1}{K}\sum_{n=1}^{N}K_{n}\llbracket\alpha\rrbracket_{m_{n}}=\sum_{n=1: m_{n}\in\llbracket\alpha\rrbracket}^{N}\frac{K_{n}}{K}
%\end{eqnarray*}
%
\par
%In the above discussion, the Fenstad theorem concerns the relation between formulae and models, whereas maximum likelihood estimation concerns the relation between models and data. We have shown that the logical model not only follows the concepts but also treats them as probabilistic inference in a unified way.
We have shown that the logical model not only follows the Fenstad theorem and maximum likelihood estimation but also treats their results as probabilistic inference in a unified way. Their results are both in the scope of statistics shown in Figure \ref{fig:LandS}. This is an important fact because, in the next section, we will discuss that the logical model can also deal with the scope of logic shown in Figure \ref{fig:LandS}. 
%
%Fenstadの定理は論理式に適切な確率値を割り当てるためにモデルを参照し，最尤推定法はモデルに適切な確率値を割り当てるためにデータを参照する．我々の生成モデル上の確率的推論はこれらを代替している．
%%%
%
%\par
%我々は我々の生成モデルがFenstadの定理及び最尤推定量をfollowすることを示した，これらはいずれも図\ref{fig:fig1}で示すstatisticsの範疇である．これは極めて重要な事実である，なぜなら我々は次章においてそれがlogicの範疇をカバーできることをも議論するからである．
\par
There are some practical advantages of the logical models. The computational complexity of Equation (\ref{eq:data-driven}) depends on $N$, which is unbounded in predicate logic and exponentially increases in propositional logic with respect to the number of propositional symbols. However, Equation (\ref{eq:data-driven}) can be transformed as follows for a linear complexity with respect to the number of data, i.e., $K$.
\begin{eqnarray}\label{eq:linear}
p(\alpha)=\sum_{n=1}^{N}\llbracket\alpha\rrbracket_{m_{n}}\frac{K_{n}}{K}=\sum_{k=1}^{K}\llbracket\alpha\rrbracket_{m(d_{k})}\frac{1}{K}
\end{eqnarray}
In addition, Equation (\ref{eq:data-driven}) has only a constant complexity for recalculation for new data. Let $p_{K}$ denote the probability calculated with $K$ data. $p_{K+1}(\alpha)$ can be calculated using $p_{K}(\alpha)$ as follows.
%
%式（\ref{eq:data-driven}）の計算量はモデル数$K$に依存する，それは述語論理ではunboundedであり，命題論理では命題記号数に関して指数関数的に増える．幸運なことに，これは変形できる，データ数に関して線形の計算量を持つように．
%
%\begin{eqnarray*}
%p(\alpha)=\sum_{n=1}^{N}\llbracket\alpha\rrbracket_{m_{n}}\frac{K_{n}}{K}=\sum_{k=1}^{K}\llbracket\alpha\rrbracket_{m(d_{k})}\frac{1}{K}
%\end{eqnarray*}
%これは式（\ref{eq:fenstad+ML}）よりもずっとよい．なぜならその計算量はモデル数$K$に依存する，それは述語論理ではunboundedであり，命題論理では命題記号数に関して指数関数的に増える．
%
%さらに，新たなデータが獲得されるとき，式（\ref{eq:data-driven}）の再計算は定数の計算量しか持たない．$K$個のデータから計算された確率を$p_{K}$と書く．$p_{K+1}(\alpha)$は$p_{K}(\alpha)$から次式で計算される．
%
\begin{eqnarray}\label{eq:updating}
p_{K+1}(\alpha)&=&\sum_{n=1}^{N}p(\alpha|m_{n})\sum_{k=1}^{K+1}p(m_{n}|d_{k})p(d_{k})\nonumber\\
&=&\sum_{n=1}^{N}p(\alpha|m_{n})\sum_{k=1}^{K}p(m_{n}|d_{k})p(d_{k})\nonumber\\
&&+\sum_{n=1}^{N}p(\alpha|m_{n})p(m_{n}|d_{K+1})p(d_{K+1})\nonumber\\
&=&\frac{K}{K+1}\sum_{n=1}^{N}p(\alpha|m_{n})\sum_{k=1}^{K}p(m_{n}|d_{k})\frac{1}{K}\nonumber\\
&&+\sum_{n=1}^{N}p(\alpha|m_{n})p(m_{n}|d_{K+1})\frac{1}{K+1}\nonumber\\
&=&\frac{Kp_{K}(\alpha)+\llbracket\alpha\rrbracket_{m(d_{K+1})}}{K+1}
%&=\frac{\llbracket\alpha\rrbracket_{m(d_{N+1})}+Np_{N}(\alpha)}{N+1}
\end{eqnarray}
\par
Finally, as demonstrated in the following example, Equation (\ref{eq:updating}) is good at modelling the development of commonsense knowledge.
%最後に，式（\ref{eq:data-driven}）はデータドリブンな常識の扱いに優れている．次の例は，我々の生成モデルはデータが知識を更新して常識を獲得する過程を確率的にモデル化していることを示す．
%
%%%%%%%%%%%%%%%
\begin{example}
\begin{table}[t]
\centering
\setlength{\tabcolsep}{0.5mm} 
\caption{New data.}
\label{ex:update}
\begin{tabular}{c|cc|c|c}
& $bird$ & $fly$ & data & new data\\\hline
$m_{1}$ & $0$ & $0$ & $\times\times\times\times\times$ &\\
$m_{2}$ & $0$ & $1$ & $\times\times$ &\\
$m_{3}$ & $1$ & $0$ & & $\times$\\
$m_{4}$ & $1$ & $1$ & $\times\times\times$
\end{tabular}
\end{table}
%
%Each cell of Table \ref{tab:PL} corresponds to a different model constructed with propositional symbols `$bird$' and `$fly$' meaning that `it is a bird' and `it flies,' respectively. Each cell has data that belongs to the corresponding models.
%
Let `$bird$' and `$fly$' be two propositional symbols meaning `It is a bird.' and `It flies.', respectively. Each row of Table \ref{ex:update} shows a different model. Given the ten data shown in the fourth column, the probability that $bird$ implies $fly$ is calculated using Equation (\ref{eq:linear}), as follows.
%\par
%次式は含意「$bird\rightarrow fly$」の確率を示す．
%
\begin{align*}
%&p(fly\leftarrow bird)=\sum_{n=1}^{4}p(fly\leftarrow bird|m_{n})\sum_{k=1}^{10}p(m_{n}|d_{k})p(d_{k})\\
&p(bird\rightarrow fly)=\sum_{k=1}^{10}\llbracket bird\rightarrow fly\rrbracket_{m(d_{k})}\frac{1}{10}=1
%&=\sum_{n=1}^{4}\llbracket fly\leftarrow bird\rrbracket_{m_{n}}\frac{K_{n}}{10}\\
%&=\frac{\sum_{n=1}^{10}\llbracket fly\leftarrow bird\rrbracket_{m(d_{n})}}{10}=1
%=\frac{|\llbracket fly\leftarrow bird\rrbracket|}{10}
\end{align*}
It is obvious from the logical model that the counterintuitive knowledge that birds must fly comes from a lack of data. Indeed, taking into account the eleventh data shown in the last column, the probability is updated using Equation (\ref{eq:updating}), as follows.
%「鳥は必ず飛ぶ」というこの推論結果は常識に反する．これがデータ不足によって生じていることは我々の推論メカニズムから明らかである．事実，モデル$m_{3}$に属する11個目のデータの観測はこの結果を否定する．その更新は式（\ref{eq:updating}）に従う．
%
\begin{eqnarray*}
p_{11}(\alpha)=\frac{10p_{10}(bird\rightarrow fly)+\llbracket bird\rightarrow fly\rrbracket_{m(d_{11})}}{11}=\frac{10}{11}
\end{eqnarray*}
\end{example}
%
%
%
%
%%%%%%%%%%%%%%%%%%%%%%%%%%%%%%%%%%%%%%%%%%%%%%%%%%%%%%%%%%%%%%%%%%%%%%%%%%%%%%%%%%%%%%%%%%%%%%%%%%%%%%%%%%%%%%%%%%%%%%%%%%%%%%%%%%%%%%%%%%%%%%%%%%%%%%%%%%%%%%%%%%%%%%%%%%%%%%%%%%%%%%%%%%%%%%%%%%%%%%%%%%%%%%%%%%%%%%%%%%%%%%%%%%%%%%%%%%%%%%%%%%%%%%%%%%%%%%%%%%%%%%%%%%%%%%%%%%%%%%%%%%%%%%%%
%\section{Logical Entailment}
%
%%%%%%%%%%%%%%%%%%%%%%%%%%%%%%%%%%%%%%%%%%%%%%%%%%%%%%%%%%%%%%%%%%%%%%%%%%%%%%%%%%%%%%%%%%%%%%%%%%%%%%%%%%%%%%%%%%%%%%%%
\subsection{Logical Entailment}
%In this section, the logical model is specialised in several ways to show its correctness in terms of the logics of paraconsistency, counterfactuals, nonmonotonicity and prediction.
%\par
%
We showed in the last section that, given $\{p(\Delta|M,\mu=1),p(M|D),p(D)\}$, $p(M)$ is equivalent to the maximum likelihood estimate, i.e., for all $m_{n}\in{\cal M}$,
%この論理モデルが与えられる時，我々は前章において$p(M)$は最尤推定量に等しいことを議論した．すなわち任意のモデル$m_{n}\in{\cal M}$において次式が成り立つことを示した．
%
\begin{eqnarray*}
p(m_{n})=\sum_{k=1}^{K}p(m_{n}|d_{k})p(d_{k})=\frac{K_{n}}{K}.
\end{eqnarray*}
%
%In other words, we showed the following equation, for all models $m_{n}\in{\cal M}$.
Therefore, $\{p(\Delta|M,\mu=1),p(M|D),p(D)\}$ is equivalent to $\{p(\Delta|M,\mu=1),p(M)\}$ when $p(M)$ is the maximum likelihood estimate. For the sake of simplicity, we also call the latter a logical model and use it without distinction. To discuss logical properties of the logical model, we assume $0\notin p(M)$ meaning that every model is possible, i.e., $p(m)\neq 0$, for all models. Recall that a set $\Delta$ of formulae entails a formula $\alpha$ in classical logic, denoted by $\Delta\models\alpha$, iff $\alpha$ is true in every model in which $\Delta$ is true. The following two theorems state that certain inference on the logical model is more cautious than classical entailment.
%
%すなわち$\{p(\Delta|M,\mu=1),p(M|D),p(D)\}$と最尤推定量で定義されたモデルの事前確率を持つ$p(M)$を持つような論理モデル$\{p(\Delta|M,\mu=1),p(M)\}$は等価である．議論を簡単にするために本節ではこの後者の論理モデル$\{p(\Delta|M,\mu=1),p(M)\}$を用いる．
%and $0\notin\bm{\phi}$, where $0\notin\bm{\phi}$ represents $\phi\neq 0$ for all $\phi\in\bm{\phi}$. We call the Bayesian entailment defined on the logical model a Bayesian classical entailment. This is a Bayesian entailment with the assumptions that there is no noise in the observation of $\Delta$ and that there is no impossible state of the world. 
%
\begin{theorem}\label{thrm:1}
%Let $\alpha\in L$, $\Delta\subseteq L$ and $\vapprox_{1}$ be the Bayesian classical entailment. If there is a model of $\Delta$ then $\Delta\vapprox_{1}\alpha$ iff $\Delta\models\alpha$.
%
Let $\alpha\in L$ and $\Delta\subseteq L$ such that $\llbracket\Delta\rrbracket\neq\emptyset$. $p(\alpha|\Delta)=1$ if and only if $\Delta\models\alpha$.
\end{theorem}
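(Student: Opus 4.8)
The plan is to unwind the definition of the conditional probability $p(\alpha\mid\Delta)$ directly in terms of the logical model with $\mu=1$, exploiting the fact that under $\mu=1$ the likelihoods $p(\alpha\mid m_n)$ collapse to the indicator $\llbracket\alpha\rrbracket_{m_n}$. First I would write, using conditional independence of the formulae in $\Delta$ given a model (established in the Method section) together with Proposition~\ref{kolmogorov},
\begin{eqnarray*}
p(\alpha\mid\Delta)=\frac{p(\alpha,\Delta)}{p(\Delta)}=\frac{\sum_{n=1}^{N}p(\alpha\mid m_n)\Big(\prod_{\beta\in\Delta}p(\beta\mid m_n)\Big)p(m_n)}{\sum_{n=1}^{N}\Big(\prod_{\beta\in\Delta}p(\beta\mid m_n)\Big)p(m_n)},
\end{eqnarray*}
and then substitute $\mu=1$ so that each $p(\gamma\mid m_n)=\llbracket\gamma\rrbracket_{m_n}\in\{0,1\}$. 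The key observation is that $\prod_{\beta\in\Delta}\llbracket\beta\rrbracket_{m_n}=\llbracket\Delta\rrbracket_{m_n}$, the indicator that $m_n$ satisfies every formula in $\Delta$, so the denominator becomes $\sum_{m_n\models\Delta}p(m_n)$ and the numerator becomes $\sum_{m_n\models\Delta,\ m_n\models\alpha}p(m_n)$.

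Next I would handle the two directions. For the ``if'' direction, assume $\Delta\models\alpha$. Then every model satisfying $\Delta$ also satisfies $\alpha$, so the numerator sum ranges over exactly the same set of models as the denominator sum; since $\llbracket\Delta\rrbracket\neq\emptyset$ and every model has strictly positive probability (the standing assumption $0\notin p(M)$), the denominator is nonzero and the ratio equals $1$. For the ``only if'' direction, I would argue contrapositively: if $\Delta\not\models\alpha$, there is a model $m_{n^\ast}$ with $m_{n^\ast}\models\Delta$ but $m_{n^\ast}\not\models\alpha$; this model contributes its strictly positive weight $p(m_{n^\ast})>0$ to the denominator but $0$ to the numerator, so $p(\alpha\mid\Delta)<1$.

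The main obstacle — really the only subtlety — is making sure the hypotheses are used in the right places: the assumption $\llbracket\Delta\rrbracket\neq\emptyset$ is exactly what guarantees $p(\Delta)\neq 0$ so that the conditional probability is well-defined, and the standing assumption that $p(m)\neq 0$ for every model is what makes a single counterexample model enough to strictly drop the ratio below $1$ in the ``only if'' direction. Without $p(m)>0$ everywhere, a counterexample model of probability zero would not change the ratio, and the theorem would fail; so I would be careful to invoke that assumption explicitly. I would also note in passing that the argument is insensitive to whether $\Delta$ is a set or a multiset, since repeated formulae only raise $\llbracket\beta\rrbracket_{m_n}$ to a higher power, leaving the $\{0,1\}$-valued indicator unchanged, which keeps the reduction $\prod_{\beta\in\Delta}\llbracket\beta\rrbracket_{m_n}=\llbracket\Delta\rrbracket_{m_n}$ valid.
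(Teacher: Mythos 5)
Your proposal is correct and follows essentially the same route as the paper's proof: expand $p(\alpha\mid\Delta)$ as a ratio of sums over models, use $\mu=1$ to kill every model outside $\llbracket\Delta\rrbracket$, and reduce to the ratio $\sum_{m\in\llbracket\Delta\rrbracket\cap\llbracket\alpha\rrbracket}p(m)\big/\sum_{m\in\llbracket\Delta\rrbracket}p(m)$, which equals $1$ iff $\llbracket\Delta\rrbracket\subseteq\llbracket\alpha\rrbracket$. Your explicit tracking of where $\llbracket\Delta\rrbracket\neq\emptyset$ and $0\notin p(M)$ are used is slightly more careful than the paper's final one-line ``iff,'' but it is the same argument.
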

\begin{proof}
Recall that, in formal logic, the fact that there is a model of $\Delta$ (or $\Delta$ has a model) is equivalent to the fact that there is a model $m$ in which every formula in $\Delta$ is true in $m$. Dividing models into the models of $\Delta$ and the others, we have
\begin{align*}
&p(\alpha|\Delta)=\frac{\sum_{m}p(\alpha|m)p(\Delta|m)p(m)}{\sum_{m}p(\Delta|m)p(m)}\\
&=\frac{\displaystyle{\sum_{m\in\llbracket\Delta\rrbracket}p(m)p(\alpha|m)\mu^{|\Delta|}+\sum_{m\notin\llbracket\Delta\rrbracket}p(m)p(\alpha|m)p(\Delta|m)}}{\displaystyle{\sum_{m\in\llbracket\Delta\rrbracket}p(m)\mu^{|\Delta|}+\sum_{m\notin\llbracket\Delta\rrbracket}p(m)p(\Delta|m)}}.
\end{align*}
$p(\Delta|m)=\prod_{\beta\in\Delta}p(\beta|m)=\prod_{\beta\in\Delta}\mu^{\llbracket\beta\rrbracket_{m}}(1-\mu)^{1-{\llbracket\beta\rrbracket_{m}}}$. For all $m\notin\llbracket\Delta\rrbracket$, there is $\beta\in\Delta$ such that $\llbracket\beta\rrbracket_{m}=0$. Therefore, $p(\Delta|m)=0$ when $\mu=1$, for all $m\notin\llbracket\Delta\rrbracket$. We thus have
\begin{align*}
p(\alpha|\Delta)=&\frac{\sum_{m\in\llbracket\Delta\rrbracket}p(m)p(\alpha|m)1^{|\Delta|}}{\sum_{m\in\llbracket\Delta\rrbracket}p(m)1^{|\Delta|}}\\
=&\frac{\sum_{m\in\llbracket\Delta\rrbracket}p(m)1^{\llbracket\alpha\rrbracket_{m}}0^{1-\llbracket\alpha\rrbracket_{m}}}{\sum_{m\in\llbracket\Delta\rrbracket}p(m)}.
\end{align*}
Since $1^{\llbracket\alpha\rrbracket_{m}}0^{1-\llbracket\alpha\rrbracket_{m}}=1^{1}0^{0}=1$ if $m\in\llbracket\alpha\rrbracket$ and $1^{\llbracket\alpha\rrbracket_{m}}0^{1-\llbracket\alpha\rrbracket_{m}}=1^{0}0^{1}=0$ if $m\notin\llbracket\alpha\rrbracket$, we have
\begin{align*}
p(\alpha|\Delta)=\frac{\sum_{m\in\llbracket\Delta\rrbracket\cap\llbracket\alpha\rrbracket}p(m)}{\sum_{m\in\llbracket\Delta\rrbracket}p(m)}.
\end{align*}
Now, $\frac{\sum_{m\in\llbracket\Delta\rrbracket\cap\llbracket\alpha\rrbracket}p(m)}{\sum_{m\in\llbracket\Delta\rrbracket}p(m)}=1$ iff $\llbracket\alpha\rrbracket\supseteq\llbracket\Delta\rrbracket$, i.e., $\Delta\models\alpha$.
\end{proof}
\begin{example}
Theorem \ref{thrm:1} does not hold without assumption $0\notin p(M)$. Given $p(M)=(0.6,0,0.1,0.3)$ in Example \ref{ex:BE}, $p(rain|wet)=1$ but $\{wet\}\not\models rain$.
\end{example}
\begin{theorem}\label{thrm:2}
Let $\alpha\in L$ and $\Delta\subseteq L$ such that $\llbracket\Delta\rrbracket=\emptyset$. If $p(\alpha|\Delta)=1$ then $\Delta\models\alpha$, but not vice versa.
\end{theorem}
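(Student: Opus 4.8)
I would handle the two directions separately, since an unsatisfiable $\Delta$ trivialises each in its own way. For $p(\alpha|\Delta)=1\Rightarrow\Delta\models\alpha$ I would not evaluate $p(\alpha|\Delta)$ at all. Because $\llbracket\Delta\rrbracket=\emptyset$, there is no model in which every formula of $\Delta$ is true, so the defining clause of $\Delta\models\alpha$ --- that $\alpha$ holds in every model of $\Delta$ --- is satisfied vacuously; hence $\Delta\models\alpha$ for \emph{every} $\alpha\in L$, and in particular whenever $p(\alpha|\Delta)=1$. This is just the classical principle of explosion and is a one-liner.

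\textbf{Failure of the converse.} Now I need a single $\alpha$ with $\Delta\models\alpha$ (automatic, by the above) but $p(\alpha|\Delta)\ne 1$. The snag is that at $\mu=1$ every model $m$ falsifies some $\beta\in\Delta$, so $p(\Delta|m)=\prod_{\beta\in\Delta}\mu^{\llbracket\beta\rrbracket_m}(1-\mu)^{1-\llbracket\beta\rrbracket_m}$ carries a factor $0^{1}$ and vanishes; thus $p(\Delta)=0$ and $p(\alpha|\Delta)$ is the indeterminate $0/0$. I would read the logical model at a parameter $\mu\in(0,1)$ (equivalently, take the $\mu=1$ value as the limit $\mu\uparrow 1$): then each such factor is strictly positive, and with the standing assumption $0\notin p(M)$ the denominator $p(\Delta)=\sum_{m}p(\Delta|m)p(m)$ is strictly positive, so $p(\,\cdot\,|\Delta)$ is a bona fide distribution. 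Reusing the Bayes expansion from the proof of Theorem~\ref{thrm:1}, $p(\alpha|\Delta)=\sum_{m}p(\alpha|m)\,\frac{p(\Delta|m)p(m)}{p(\Delta)}$ is a convex combination of the numbers $p(\alpha|m)\in\{\mu,1-\mu\}$, each at most $\max(\mu,1-\mu)<1$; hence $p(\alpha|\Delta)<1$ for \emph{every} $\alpha$, and any atom already witnesses the failure. Equivalently, by Kolmogorov's axioms and Proposition~\ref{negation} relativised to $\Delta$ one has $p(\alpha|\Delta)+p(\neg\alpha|\Delta)=1$, so at most one of $\alpha,\neg\alpha$ can carry conditional probability $1$; since the unsatisfiable $\Delta$ entails both, one of them is entailed yet has conditional probability below $1$.

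\textbf{Main obstacle.} The only genuine difficulty is the definitional one just flagged: giving $p(\alpha|\Delta)$ a meaning when the conditioning event has prior probability zero under the $\mu=1$ model. Once that is settled by working at $\mu<1$ (or by a limit), the content of the theorem reduces to the observation that a true Bernoulli likelihood keeps every mixture component strictly below certainty, so the posterior over $\alpha$ cannot reach $1$ even while classical entailment explodes --- exactly the ``more cautious'' behaviour this section is after.
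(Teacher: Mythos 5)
Your forward direction coincides with the paper's: unsatisfiable $\Delta$ makes $\Delta\models\alpha$ hold vacuously for every $\alpha$, so the implication is trivial. For the converse, note that the ``snag'' you flag is not an obstacle to be engineered away --- it \emph{is} the paper's proof. The paper takes $\Delta=\{\beta,\lnot\beta\}$, observes that $p(\beta|w)p(\lnot\beta|w)=\mu(1-\mu)$ for every model $w$, so at $\mu=1$ both numerator and denominator of $p(\alpha|\beta,\lnot\beta)$ vanish and the conditional probability is undefined; hence it is not equal to $1$ even though $\beta,\lnot\beta\models\alpha$. Had you stopped at your observation that $p(\alpha|\Delta)$ is the indeterminate $0/0$, you would have reproduced the intended argument. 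Your decision to ``settle'' the undefinedness by reading the model at $\mu\in(0,1)$ instead changes the object of the theorem: this section fixes the logical model $\{p(\Delta|M,\mu=1),p(M)\}$, and at any fixed $\mu<1$ your own convex-combination bound shows $p(\alpha|\Delta)\leq\max(\mu,1-\mu)<1$ for \emph{all} $\Delta$ and $\alpha$, which would destroy Theorem~\ref{thrm:1} as well. The $\mu\rightarrow 1$ reading is deliberately split off by the paper into Theorems~\ref{thrm:3}--\ref{th1}.

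The parenthetical ``equivalently, take the $\mu=1$ value as the limit $\mu\uparrow 1$'' also hides a genuine gap. Your bound $\max(\mu,1-\mu)$ tends to $1$, so strict inequality at each $\mu<1$ says nothing about the limit; indeed the limit of $p(\alpha|\beta,\lnot\beta)$ is $p(\alpha)$ (this is essentially the computation in the paper's Theorem~\ref{thrm:4}), which \emph{equals} $1$ when $\alpha$ is a tautology. So in the limit model it is false that every $\alpha$ witnesses the failure; one must pick $\alpha$ with $p(\alpha)<1$, which is exactly the extra hypothesis the paper imposes there. Your chosen witness (an atom) does happen to work under the standing assumption $0\notin p(M)$, but the justification has to go through the explicit limit, not through the degenerate bound. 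The same caveat applies to your otherwise elegant $p(\alpha|\Delta)+p(\lnot\alpha|\Delta)=1$ argument: it needs $p(\Delta)>0$, so it is only available after you have already abandoned the $\mu=1$ model the theorem is about.
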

\begin{proof}
($\Rightarrow$) If $\llbracket\Delta\rrbracket=\emptyset$ then $\Delta\models\alpha$, for all $\alpha$, in classical logic. ($\Leftarrow$) We show a counterexample where $\Delta\models\alpha$ but $p(\alpha|\Delta)$ is undefined. $\beta,\lnot\beta\models\alpha$ holds because $\llbracket\beta,\lnot\beta\rrbracket=\emptyset$ results in $\llbracket\beta,\lnot\beta\rrbracket\subseteq\llbracket\alpha\rrbracket$. Meanwhile, $p(\alpha|\beta,\lnot\beta)$ is given as follows.
\begin{align*}
&p(\alpha|\beta,\lnot\beta)=\frac{\sum_{w}p(w)p(\alpha|w)p(\beta|w)p(\lnot\beta|w)}{\sum_{w}p(w)p(\beta|w)p(\lnot\beta|w)}\\
&=\frac{\mu(1-\mu)\sum_{w}p(w)p(\alpha|w)}{\mu(1-\mu)\sum_{w}p(w)}
\end{align*}
This is undefined due to division by zero when $\mu=1$.
\end{proof}
\par
Everything is entailed from a contradiction in the classical entailment. Certain inference on the logical model is more cautious than the classical entailment because the proof of Theorem \ref{thrm:2} states that nothing is entailed from a contradiction. In the next section, we look at a logical model that entails something reasonable from contradictions.
%However, the proof of Theorem \ref{thrm:2} implies that nothing is entailed from contradictions in the Bayesian classical entailment. 
%
%%%%%%%%%%%%%%%%%%%%%%%%%%%%%%%%%%%%%%%%%%%%%%%%%%%%%%%%%%%%%%%%%%%%%%%%%%%%%%%%%%%%%%%%%%%%%%%%%%%%%%%%%%%%%%%%%%%%%%%%
\subsection{Paraconsistency}
Let $\{\lim_{\mu\rightarrow 1}p(\Delta|M,\mu), p(M)\}$ be a logical model such that $\mu\rightarrow 1$ and $0\notin p(M)$ where $\mu\rightarrow 1$ represents $\mu$ approaches 1. The following two theorems state that certain inference on the logical model is more cautious than classical entailment.
\begin{theorem}\label{thrm:3}
Let $\alpha\in L$ and $\Delta\subseteq L$ such that $\llbracket\Delta\rrbracket\neq\emptyset$. $p(\alpha|\Delta)=1$ if and only if $\Delta\models\alpha$.
\end{theorem}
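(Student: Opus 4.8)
The plan is to redo the computation in the proof of Theorem~\ref{thrm:1}, this time keeping $\mu$ as a free parameter in $(0,1)$ and taking the limit $\mu\to1$ only at the very end. Since $p(M)$ does not depend on $\mu$, only the likelihoods move, and I will read the conditional $p(\alpha|\Delta)$ of this logical model as $\lim_{\mu\to1}p_{\mu}(\alpha|\Delta)$, where $p_{\mu}$ is the distribution obtained with likelihood parameter $\mu$ and the limit is taken \emph{after} forming the ratio. As in Theorem~\ref{thrm:1}, I split the models into those in $\llbracket\Delta\rrbracket$ and those outside:
\begin{align*}
p_{\mu}(\alpha|\Delta)=\frac{\displaystyle{\sum_{m\in\llbracket\Delta\rrbracket}p(m)p(\alpha|m)\mu^{|\Delta|}+\sum_{m\notin\llbracket\Delta\rrbracket}p(m)p(\alpha|m)p(\Delta|m)}}{\displaystyle{\sum_{m\in\llbracket\Delta\rrbracket}p(m)\mu^{|\Delta|}+\sum_{m\notin\llbracket\Delta\rrbracket}p(m)p(\Delta|m)}}.
\end{align*}

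Next I analyse each piece as $\mu\to1$. For $m\notin\llbracket\Delta\rrbracket$ there is at least one $\beta\in\Delta$ with $\llbracket\beta\rrbracket_{m}=0$, so $p(\Delta|m)=\prod_{\beta\in\Delta}\mu^{\llbracket\beta\rrbracket_{m}}(1-\mu)^{1-\llbracket\beta\rrbracket_{m}}$ equals $\mu^{a}(1-\mu)^{|\Delta|-a}$ for some $a<|\Delta|$ and is therefore divisible by $1-\mu$; since in addition $p(\alpha|m)\leq1$, every such summand tends to $0$. For $m\in\llbracket\Delta\rrbracket$ we have $p(\alpha|m)=\mu^{\llbracket\alpha\rrbracket_{m}}(1-\mu)^{1-\llbracket\alpha\rrbracket_{m}}$, which tends to $1$ if $m\models\alpha$ and to $0$ otherwise, while $\mu^{|\Delta|}\to1$. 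Hence the numerator tends to $\sum_{m\in\llbracket\Delta\rrbracket\cap\llbracket\alpha\rrbracket}p(m)$ and the denominator tends to $\sum_{m\in\llbracket\Delta\rrbracket}p(m)$.

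Because $\llbracket\Delta\rrbracket\neq\emptyset$ and $0\notin p(M)$, the denominator's limit is strictly positive, so the limit of the ratio is the ratio of the limits:
\begin{align*}
p(\alpha|\Delta)=\lim_{\mu\to1}p_{\mu}(\alpha|\Delta)=\frac{\sum_{m\in\llbracket\Delta\rrbracket\cap\llbracket\alpha\rrbracket}p(m)}{\sum_{m\in\llbracket\Delta\rrbracket}p(m)}.
\end{align*}
This is exactly the expression reached in the proof of Theorem~\ref{thrm:1}, so the remaining step is identical: since every $p(m)$ is positive and the model set is finite, this quotient equals $1$ iff $\llbracket\Delta\rrbracket\cap\llbracket\alpha\rrbracket=\llbracket\Delta\rrbracket$, i.e. $\llbracket\Delta\rrbracket\subseteq\llbracket\alpha\rrbracket$, i.e. $\Delta\models\alpha$.

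The only point requiring care is the passage to the limit: I must justify that the denominator does not itself vanish in the limit (this is precisely where $\llbracket\Delta\rrbracket\neq\emptyset$ and $0\notin p(M)$ enter) and that the contributions of models outside $\llbracket\Delta\rrbracket$ genuinely disappear rather than surviving, which holds because each such term is divisible by $1-\mu$. Everything else is a rerun of Theorem~\ref{thrm:1}. In effect the theorem records that, in the consistent case $\llbracket\Delta\rrbracket\neq\emptyset$, replacing $\mu=1$ by the limit $\mu\to1$ costs nothing; the genuinely new behaviour of the limiting model appears only in the inconsistent case $\llbracket\Delta\rrbracket=\emptyset$.
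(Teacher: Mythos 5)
Your argument is correct and is essentially the paper's own proof: the paper simply asserts that taking $\lim_{\mu\to 1}$ does not change the proof of Theorem~\ref{thrm:1}, and you have verified that claim in detail (the off-$\llbracket\Delta\rrbracket$ terms carry a factor of $1-\mu$ and vanish, while the denominator's limit stays positive because $\llbracket\Delta\rrbracket\neq\emptyset$ and $0\notin p(M)$). Your version is more explicit than the paper's one-line justification, but the route is the same.
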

\begin{proof}
%The proof of Theorem \ref{thrm:1} still holds under the presence of the limit operation.
$\lim_{\mu\rightarrow 1}$ does not change the proof of Theorem \ref{thrm:1}.
\end{proof}
\begin{theorem}\label{thrm:4}
Let $\alpha\in L$ and $\Delta\subseteq L$ such that $\llbracket\Delta\rrbracket=\emptyset$. If $p(\alpha|\Delta)=1$ then $\Delta\models\alpha$, but not vice versa.
\end{theorem}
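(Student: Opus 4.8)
The plan is to follow the proof of Theorem~\ref{thrm:2} almost verbatim for the forward direction, and, for the failure of the converse, to reuse its counterexample $\Delta=\{\beta,\lnot\beta\}$ --- the essential difference being that, since the model now prescribes $\mu\to 1$, every conditional probability is read as a \emph{limit} of the corresponding expression with parameter $\mu<1$, so the quantity $p(\alpha\mid\beta,\lnot\beta)$ that was undefined ($0/0$) in Theorem~\ref{thrm:2} is here a genuine number. For the forward implication nothing probabilistic is needed: since $\llbracket\Delta\rrbracket=\emptyset$, the set $\Delta$ has no model, so $\Delta\models\alpha$ holds for every $\alpha$ by \emph{ex falso}, and hence $p(\alpha\mid\Delta)=1$ trivially entails $\Delta\models\alpha$.

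For the converse I would take any contingent $\alpha$ (for instance an atomic proposition) together with $\Delta=\{\beta,\lnot\beta\}$ for some $\beta$; then $\Delta\models\alpha$ because $\llbracket\Delta\rrbracket=\emptyset\subseteq\llbracket\alpha\rrbracket$. Writing
\[p(\alpha\mid\beta,\lnot\beta)=\frac{\sum_{m}p(m)\,p(\alpha|m)\,p(\beta|m)\,p(\lnot\beta|m)}{\sum_{m}p(m)\,p(\beta|m)\,p(\lnot\beta|m)}\]
and invoking the Bernoulli form of the likelihood together with $\llbracket\lnot\beta\rrbracket_{m}=1-\llbracket\beta\rrbracket_{m}$, one computes $p(\beta|m)\,p(\lnot\beta|m)=\mu^{\llbracket\beta\rrbracket_{m}}(1-\mu)^{1-\llbracket\beta\rrbracket_{m}}\cdot\mu^{1-\llbracket\beta\rrbracket_{m}}(1-\mu)^{\llbracket\beta\rrbracket_{m}}=\mu(1-\mu)$, the same value for \emph{every} model $m$. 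For each fixed $\mu\in[0,1)$ this common factor is nonzero, so it cancels from numerator and denominator, leaving $p(\alpha\mid\beta,\lnot\beta)=\sum_{m}p(m)\,p(\alpha|m)=p(\alpha)$ identically in $\mu$ on $[0,1)$. Letting $\mu\to 1$ then yields $p(\alpha\mid\beta,\lnot\beta)=\lim_{\mu\to1}p(\alpha)=\sum_{m\in\llbracket\alpha\rrbracket}p(m)$, which under the standing hypothesis $0\notin p(M)$ is strictly below $1$ whenever $\alpha$ is contingent, since then some positive-probability model falsifies $\alpha$. Hence $\Delta\models\alpha$ while $p(\alpha\mid\Delta)\neq 1$, the desired counterexample; intuitively, conditioning on an outright contradiction falls back on the prior $p(\alpha)$ rather than collapsing to $1$, which is exactly the paraconsistent behaviour promised before the theorem.

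The one delicate point --- and the step I expect to be the main obstacle --- is being scrupulous that the cancellation of $\mu(1-\mu)$ is carried out \emph{before} the limit, i.e. at the level of the function $\mu\mapsto p(\alpha\mid\beta,\lnot\beta)$ on $[0,1)$, where the denominator never vanishes, rather than by substituting $\mu=1$ at the outset (which is precisely what rendered the quantity undefined in Theorem~\ref{thrm:2}). Once this is granted, the remaining arithmetic is the same Bernoulli bookkeeping already used in the proofs of Theorems~\ref{thrm:1}--\ref{thrm:3}, and the strict inequality $\sum_{m\in\llbracket\alpha\rrbracket}p(m)<1$ is immediate from $0\notin p(M)$ together with the existence of a model falsifying the contingent $\alpha$.
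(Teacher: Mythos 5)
Your proof is correct and follows essentially the same route as the paper's: the forward direction is the same \emph{ex falso} observation, and for the converse you cancel the model-independent factor ($\mu(1-\mu)$ for your $\Delta=\{\beta,\lnot\beta\}$, versus $1-\mu$ for the paper's $\beta\land\lnot\beta$) before taking $\mu\to 1$, arriving at $p(\alpha|\Delta)=p(\alpha)<1$ for contingent $\alpha$. The only quibble is that the common factor vanishes at $\mu=0$, so the cancellation holds on $(0,1)$ rather than $[0,1)$, which is immaterial for the limit.
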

\begin{proof}
($\Rightarrow$) Same as for Theorem 2. ($\Leftarrow$) We show a counterexample where $\Delta\models\alpha$ but $p(\alpha|\Delta)\neq 1$. Suppose $p(\alpha)<1$. We can show $p(\alpha|\beta\land\lnot\beta)<1$ as follows.
%The following derivation exemplifies $p(\alpha|\beta\land\lnot\beta)<1$.
%
\begin{align*}
&p(\alpha|\beta\land\lnot\beta)\\
&=\frac{\sum_{m}p(m)\lim_{\mu\rightarrow 1}p(\alpha|m)\lim_{\mu\rightarrow 1}p(\beta\land\lnot\beta|m)}{\sum_{m}p(m)\lim_{\mu\rightarrow 1}p(\beta\land\lnot\beta|m)}\\
&=\lim_{\mu\rightarrow 1}\frac{(1-\mu)\sum_{m}p(m)p(\alpha|m)}{(1-\mu)\sum_{m}p(m)}=\lim_{\mu\rightarrow 1}\frac{\sum_{m}p(m)p(\alpha|m)}{\sum_{m}p(m)}\\
&=\sum_{m}p(m)\lim_{\mu\rightarrow 1}p(\alpha|m)=p(\alpha)
\end{align*}
Therefore, $p(\alpha|\beta\land\lnot\beta)\neq 1$. Note that $\beta\land\lnot\beta\models\alpha$ because $\llbracket\beta\land\lnot\beta\rrbracket=\emptyset$ results in $\llbracket\beta\land\lnot\beta\rrbracket\subseteq\llbracket\alpha\rrbracket$.
%See Appendices.
%In contrast to Theorem \ref{thrm:2}, 
\end{proof}
\par
To characterise the certain inference on the logical model, we define an approximate model using maximal consistent subsets with respect to set cardinality. Recall that a set of formulae is consistent if there is a model of the set. 
%
%極大無矛盾部分集合を用いた矛盾からの推論方法の観点から見るベイジアン准無矛盾伴意の妥当性を示す．Recall that a set of formulae is consistent if there is a model of the set. 集合の要素数に関して極大な無矛盾部分集合（set-cardinality-maximal consistent subsets）を使って近似モデルを定義する．これはNeurIPSで採用した定義とは異なるし，また集合の包含関係に関して極大な無矛盾部分集合（set-inclusion-maximal consistent subsets）を使った定義とも異なる．
%
\begin{definition}[Approximate model]\label{def:approximatemodel}
Let $m$ be a model and $\Delta\subseteq L$ be an inconsistent set of formulae. $m$ is an approximate model of $\Delta$ if $m$ is a model of a maximal (w.r.t. set cardinality) consistent subset of $\Delta$.
\end{definition}
\begin{theorem}\label{th1}
Let $\Delta\subseteq L$ and $\alpha\in L$. $p(\alpha|\Delta)=1$ if and only if $\Delta'\models\alpha$, for all maximal (w.r.t. set cardinality) consistent subsets $\Delta'$ of $\Delta$.
\end{theorem}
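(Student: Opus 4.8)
The plan is to evaluate $p(\alpha|\Delta)$ explicitly in the $\mu\to 1$ limit and then read off the combinatorial condition. First I would expand
\[
p(\alpha|\Delta)=\frac{\sum_{m}p(m)\,p(\alpha|m)\,p(\Delta|m)}{\sum_{m}p(m)\,p(\Delta|m)}
\]
and use the product form $p(\Delta|m)=\prod_{\beta\in\Delta}\mu^{\llbracket\beta\rrbracket_{m}}(1-\mu)^{1-\llbracket\beta\rrbracket_{m}}=\mu^{s(m)}(1-\mu)^{|\Delta|-s(m)}$, where $s(m):=|\{\beta\in\Delta\mid m\models\beta\}|$ counts the formulae of $\Delta$ satisfied by $m$. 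Let $s^{*}:=\max_{m\in\mathcal{M}}s(m)$; since $\mathcal{M}$ is finite under the paper's assumptions this maximum is attained, on a nonempty set $\mathcal{M}^{*}:=\{m\in\mathcal{M}\mid s(m)=s^{*}\}$.

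Next I would divide numerator and denominator by $\mu^{s^{*}}(1-\mu)^{|\Delta|-s^{*}}$ and pass to the limit termwise (legitimate, as these are finite sums over $\mathcal{M}$): every $m$ with $s(m)<s^{*}$ contributes a factor $(1-\mu)^{s^{*}-s(m)}\to 0$, while for $m\in\mathcal{M}^{*}$ the residual factor is $1$ and $p(\alpha|m)\to\llbracket\alpha\rrbracket_{m}$. This yields
\[
p(\alpha|\Delta)=\frac{\sum_{m\in\mathcal{M}^{*}}p(m)\llbracket\alpha\rrbracket_{m}}{\sum_{m\in\mathcal{M}^{*}}p(m)},
\]
which is well defined because $0\notin p(M)$ and $\mathcal{M}^{*}\neq\emptyset$, and (again using $p(m)>0$ for all $m$) equals $1$ iff $m\models\alpha$ for every $m\in\mathcal{M}^{*}$.

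It then remains to recognise $\mathcal{M}^{*}$ as exactly the set of approximate models of $\Delta$ in the sense of Definition \ref{def:approximatemodel}, i.e.\ $\mathcal{M}^{*}=\bigcup\{\llbracket\Delta'\rrbracket\mid \Delta'\text{ a maximal (w.r.t.\ cardinality) consistent subset of }\Delta\}$. The key observation is that a maximal-cardinality consistent subset has cardinality exactly $s^{*}$: any consistent $S\subseteq\Delta$ has a model $m$ with $s(m)\geq|S|$, so $|S|\leq s^{*}$; and for $m$ realising $s(m)=s^{*}$ the set $\{\beta\in\Delta\mid m\models\beta\}$ is consistent of size $s^{*}$. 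Hence a model of such a $\Delta'$ satisfies at least, therefore exactly, $s^{*}$ formulae of $\Delta$ and lies in $\mathcal{M}^{*}$; conversely each $m\in\mathcal{M}^{*}$ is a model of the maximal-cardinality consistent subset $\{\beta\in\Delta\mid m\models\beta\}$. Thus ``$m\models\alpha$ for every $m\in\mathcal{M}^{*}$'' is precisely ``$\Delta'\models\alpha$ for every maximal-cardinality consistent subset $\Delta'$ of $\Delta$'', which gives both directions at once (when $\llbracket\Delta\rrbracket\neq\emptyset$, $\Delta$ is its own unique such $\Delta'$ and the claim reduces to Theorem \ref{thrm:3}).

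The main obstacle I anticipate is making the limit step fully rigorous — confirming that $\mu^{s^{*}}(1-\mu)^{|\Delta|-s^{*}}$ is genuinely the dominant scale and that the termwise passage to the limit over $\mathcal{M}$ is valid — together with the bookkeeping that identifies $\mathcal{M}^{*}$ with the union of the model sets of the maximal-cardinality consistent subsets. Everything else follows the pattern already used for Theorems \ref{thrm:1}--\ref{thrm:4}.
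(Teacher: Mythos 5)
Your proposal is correct and follows essentially the same route as the paper's proof: expand $p(\Delta|m)=\mu^{s(m)}(1-\mu)^{|\Delta|-s(m)}$, normalise by the dominant scale so that only the models maximising $s(m)$ survive the limit $\mu\to 1$, and identify those survivors with the models of the maximal-cardinality consistent subsets. The only difference is presentational — you define the surviving set as $\mathcal{M}^{*}=\{m: s(m)=s^{*}\}$ and then prove it equals the set of approximate models, a step the paper leaves largely implicit — which if anything makes the argument slightly more self-contained.
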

\begin{proof}
We use notation $(\!(\Delta)\!)$ to denote the set of all approximate models of $\Delta$. We also use notation $|
\Delta|_{m}$ to denote the number of formulas in $\Delta$ that are true in $m$, i.e. $|\Delta|_{m}=\sum_{\beta\in\Delta}\llbracket\beta\rrbracket_{m}$. Dividing models into $(\!(\Delta)\!)$ and the others, we have 
\begin{align*}
&p(\alpha|\Delta)=\lim_{\mu\rightarrow 1}\frac{\sum_{m}p(\alpha|m)p(m)p(\Delta|m)}{\sum_{m}p(m)p(\Delta|m)}=\lim_{\mu\rightarrow 1}\\
&\frac{\displaystyle{\sum_{\hat{m}\in(\!(\Delta)\!)}p(\alpha|\hat{m})p(\hat{m})p(\Delta|\hat{m})+\sum_{m\notin(\!(\Delta)\!)}p(\alpha|m)p(m)p(\Delta|m)}}{\displaystyle{\sum_{\hat{m}\in(\!(\Delta)\!)}p(\hat{m})p(\Delta|\hat{m})+\sum_{m\notin(\!(\Delta)\!)}p(m)p(\Delta|m)}}.
\end{align*}
%\end{eqnarray*}
%
Now, $p(\Delta|m)$ can be developed as follows, for all $m$ (regardless of the membership of $(\!(\Delta)\!)$).
\begin{align*}
&p(\Delta|m)=\prod_{\beta\in\Delta}p(\beta|m)=\prod_{\beta\in\Delta}\mu^{\llbracket\beta\rrbracket_{m}}(1-\mu)^{1-\llbracket\beta\rrbracket_{m}}\\
&=\mu^{\sum_{\beta\in\Delta}\llbracket\beta\rrbracket_{m}}(1-\mu)^{\sum_{\beta\in\Delta}(1-\llbracket\beta\rrbracket_{m})}\\
&=\mu^{|\Delta|_{m}}(1-\mu)^{|\Delta|-|\Delta|_{m}}
\end{align*}
Therefore, $p(\alpha|\Delta)=\lim_{\mu\rightarrow 1}\frac{W+X}{Y+Z}$ where
\begin{align*}
&W=\sum_{\hat{m}\in(\!(\Delta)\!)}p(\alpha|\hat{m})p(\hat{m})\mu^{|\Delta|_{\hat{m}}}(1-\mu)^{|\Delta|-|\Delta|_{\hat{m}}}\\
&X=\sum_{m\notin(\!(\Delta)\!)}p(\alpha|m)p(m)\mu^{|\Delta|_{m}}(1-\mu)^{|\Delta|-|\Delta|_{m}}\\
&Y=\sum_{\hat{m}\in(\!(\Delta)\!)}p(\hat{m})\mu^{|\Delta|_{\hat{m}}}(1-\mu)^{|\Delta|-|\Delta|_{\hat{m}}}\\
&Z=\sum_{m\notin(\!(\Delta)\!)}p(m)\mu^{|\Delta|_{m}}(1-\mu)^{|\Delta|-|\Delta|_{m}}
\end{align*}
%
%\begin{align*}
%\lim_{\mu\rightarrow 1}\frac{\displaystyle{\sum_{\hat{w}\in(\!(\Delta)\!)}p(\alpha|\hat{w})p(\hat{w})\mu^{|\Delta|_{\hat{w}}}(1-\mu)^{|\Delta|-|\Delta|_{\hat{w}}}+\sum_{w\notin(\!(\Delta)\!)}p(\alpha|w)p(w)\mu^{|\Delta|_{w}}(1-\mu)^{|\Delta|-|\Delta|_{w}}}}{\displaystyle{\sum_{\hat{w}\in(\!(\Delta)\!)}p(\hat{w})\mu^{|\Delta|_{\hat{w}}}(1-\mu)^{|\Delta|-|\Delta|_{\hat{w}}}+\sum_{w\notin(\!(\Delta)\!)}p(w)\mu^{|\Delta|_{w}}(1-\mu)^{|\Delta|-|\Delta|_{w}}}}.
%\end{align*}
%
%%Besides, $|\Delta|_{\hat{w}}> |\Delta|_{w}$ and $|\Delta|-|\Delta|_{w}> |\Delta|-|\Delta|_{\hat{w}}$, for all $\hat{w}\in(\!(\Delta)\!)$ and $w\notin(\!(\Delta)\!)$.
%
From Definition \ref{def:approximatemodel}, $|\Delta|_{\hat{m}}$ has the same value, for all $\hat{m}\in(\!(\Delta)\!)$. Therefore, the fraction can be simplified by dividing the denominator and numerator by $(1-\mu)^{|\Delta|-|\Delta|_{\hat{m}}}$. We thus have $p(\alpha|\Delta)=\lim_{\mu\rightarrow 1}\frac{W'+X'}{Y'+Z'}$ where
\begin{align*}
&W'=\sum_{\hat{m}\in(\!(\Delta)\!)}p(\alpha|\hat{m})p(\hat{m})\mu^{|\Delta|_{\hat{m}}}\\
&X'=\sum_{m\notin(\!(\Delta)\!)}p(\alpha|m)p(m)\mu^{|\Delta|_{m}}(1-\mu)^{|\Delta|_{\hat{m}}-|\Delta|_{m}}\\
&Y'=\sum_{\hat{m}\in(\!(\Delta)\!)}p(\hat{m})\mu^{|\Delta|_{\hat{m}}}\\
&Z'=\sum_{m\notin(\!(\Delta)\!)}p(m)\mu^{|\Delta|_{m}}(1-\mu)^{|\Delta|_{\hat{m}}-|\Delta|_{m}}.
\end{align*}
%
%\begin{align*}
%&p(\alpha|\Delta)\\
%&=\lim_{\mu\rightarrow 1}\frac{\displaystyle{\sum_{\hat{w}\in(\!(\Delta)\!)}p(\alpha|\hat{w})p(\hat{w})\mu^{|\Delta|_{\hat{w}}}+\sum_{w\notin(\!(\Delta)\!)}p(\alpha|w)p(w)\mu^{|\Delta|_{w}}(1-\mu)^{|\Delta|_{\hat{w}}-|\Delta|_{w}}}}{\displaystyle{\sum_{\hat{w}\in(\!(\Delta)\!)}p(\hat{w})\mu^{|\Delta|_{\hat{w}}}+\sum_{w\notin(\!(\Delta)\!)}p(w)\mu^{|\Delta|_{w}}(1-\mu)^{|\Delta|_{\hat{w}}-|\Delta|_{w}}}}.
%\end{align*}
%
Applying the limit operation, we have
\begin{align*}
p(\alpha|\Delta)=\frac{\displaystyle{\sum_{\hat{m}\in(\!(\Delta)\!)}p(\alpha|\hat{m})p(\hat{m})}}{\displaystyle{\sum_{\hat{m}\in(\!(\Delta)\!)}p(\hat{m})}}=\frac{\displaystyle{\sum_{\hat{m}\in(\!(\Delta)\!)}1^{\llbracket\alpha\rrbracket_{\hat{m}}}0^{1-\llbracket\alpha\rrbracket_{\hat{m}}}p(\hat{m})}}{\displaystyle{\sum_{\hat{m}\in(\!(\Delta)\!)}p(\hat{m})}}
\end{align*}
Since $1^{\llbracket\alpha\rrbracket_{\hat{m}}}0^{1-\llbracket\alpha\rrbracket_{\hat{m}}}=1^{1}0^{0}=1$ if $\hat{m}\in\llbracket\alpha\rrbracket$ and $1^{\llbracket\alpha\rrbracket_{\hat{m}}}0^{1-\llbracket\alpha\rrbracket_{\hat{m}}}=1^{0}0^{1}=0$ if $\hat{m}\notin\llbracket\alpha\rrbracket$, we have
\begin{align*}
p(\alpha|\Delta)=\frac{\sum_{\hat{m}\in(\!(\Delta)\!)\cap\llbracket\alpha\rrbracket}p(\hat{m})}{\sum_{\hat{m}\in(\!(\Delta)\!)}p(\hat{m})}.
\end{align*}
Therefore, $p(\alpha|\Delta)=1$ holds iff $\llbracket\alpha\rrbracket\supseteq (\!(\Delta)\!)$. By definition, $m\in (\!(\Delta)\!)$ iff $m$ is a model of a maximal consistent subset of $\Delta$  w.r.t. set cardinality. Therefore, $m\in (\!(\Delta)\!)$ iff $m\in\bigcup_{\Delta'}\llbracket\Delta'\rrbracket$ where $\Delta'$ is a maximal consistent subset of $\Delta$ w.r.t. set cardinality. Therefore, $p(\alpha|\Delta)=1$ iff $\llbracket\alpha\rrbracket\supseteq\bigcup_{\Delta'}\llbracket\Delta'\rrbracket$. In other words, for all maximal (w.r.t. set cardinality) consistent subsets $\Delta'$ of $\Delta$, $\llbracket\alpha\rrbracket\supseteq\llbracket\Delta'\rrbracket$, i.e., $\Delta'\models\alpha$.
\end{proof}
%
%
%%%%%%%%%
\begin{example}
%Suppose the uniform prior distribution over states of the world of two propositional variables $a$ and $b$. Since $p(a|a,b,\lnot b)=1$, $a,b,\lnot b\vapprox_{\theta}a$ iff $\theta=1$. Since $p(a|a\land b,\lnot b)=\frac{2}{3}$, $a\land b,\lnot b\vapprox_{\theta}a$ iff $\theta\leq \frac{2}{3}$. Since $p(a|a\land b\land\lnot b)=\frac{1}{2}$, $a\land b\land\lnot b\vapprox_{\theta}a$ iff $\theta\leq \frac{1}{2}$.
%
%Let us abbreviate $rain$ and $wet$ to $a$ and $b$, respectively. Given $\mu\rightarrow 1$ and $\bm{\phi}=(0.4,0.2,0.1,0.3)$ in Example \ref{ex:BE}, we have
Let $\mu\rightarrow 1$ and $p(M)=(0.25,0.25,0.25,0.25)$ in Example \ref{ex:BE}. Given $\Delta=\{rain,wet,rain\rightarrow wet,\lnot wet\}$, there are three maximal (w.r.t. set inclusion) consistent subsets, i.e., $S_{1}=\{rain,wet,rain\rightarrow wet\}$, $S_{2}=\{rain,\lnot wet\}$ and $S_{3}=\{rain\rightarrow wet,\lnot wet\}$, and one maximal (w.r.t. set cardinality) consistent subset, i.e., $S_{1}$. $p(rain|\Delta)=1$ and $S_{1}\models rain$ hold, but $S_{3}\not\models rain$.
%We have
%
%\begin{align*}
%&p(rain|\Delta)=\lim_{\mu\rightarrow 1}\\
%&=\frac{\sum_{m}p(rain|m)^{2}p(wet|m)p(rain\rightarrow wet|m)p(\lnot wet|m)p(m)}{\sum_{m}p(rain|m)p(wet|m)p(rain\rightarrow wet|m)p(m)}\\
%&p(rain|S_{1})\\
%&=\lim_{\mu\rightarrow 1}\frac{\sum_{m}p(rain|m)^{2}p(wet|m)p(rain\rightarrow wet|m)p(m)}{\sum_{m}p(rain|m)p(wet|m)p(rain\rightarrow wet|m)p(m)}\\
%&=\lim_{\mu\rightarrow 1}\frac{\mu(1-\mu)^{3}\phi_{1}+\mu(1-\mu)^{3}\phi_{2}+\mu^{3}(1-\mu)\phi_{3}+\mu^{3}(1-\mu)\phi_{4}}{\mu(1-\mu)^{2}\phi_{1}+\mu(1-\mu)^{2}\phi_{2}+\mu^{2}(1-\mu)\phi_{3}+\mu^{2}(1-\mu)\phi_{4}}\\
%
%&=\lim_{\mu\rightarrow 1}\frac{\mu(1-\mu)^{3}+\mu^{2}(1-\mu)^{2}+\mu^{2}(1-\mu)^{2}+\mu^{4}}{\mu(1-\mu)^{2}+\mu(1-\mu)^{2}+\mu^{3}}=1\\
%=\frac{0.1+0.3}{0.1+0.3}
%
%&p(rain|S_{2})=\lim_{\mu\rightarrow 1}\\
%&\frac{\mu(1-\mu)^{2}+(1-\mu)^{3}+\mu^{3}+\mu^{2}(1-\mu)}{\mu(1-\mu)+(1-\mu)^{2}+\mu^{2}+\mu(1-\mu)}=1\\
%
%&p(rain|S_{3})=\lim_{\mu\rightarrow 1}\\
%&\frac{\mu^{2}(1-\mu)+\mu(1-\mu)^{2}+\mu^{2}(1-\mu)+\mu^{2}(1-\mu)}{\mu^{2}+\mu(1-\mu)+\mu(1-\mu)+\mu(1-\mu)}=0\\
%\end{align*}
%%%%%%
\end{example}
%
%%%%%%%%%%%%%%%%%%%%%%%%%%%%%%%%%%%%%%%%%%%%%%%%%%%%%%%%%%%%%%%%%%%%%%%%%%%%%%%%%%%%%%%%%%%%%%%%%%%%%%%%%%%%%%%%%%%%%%%%%%%%%%%%%%%%%%%%%%%%%%%%%%%%%%%%%%%%%%%%%%%%%%%%%%%%%%%%%%%%%%%%%%%%%%%%%%%%%%%%%%%%%%%%%%%%%%%%%%%%%%
\subsection{Counterfactuals}
\emph{Would England have won the match against Argentina at the 1986 World Cup if Diego Maradona had not used his hand to score the first goal?} Reasoning with this kind of false and imaginary conditional statement is often called counterfactual reasoning. Let $\{\lim_{\mu\rightarrow 1}p(\Delta|M,\mu), p(M)\}$ be a logical model such that $\mu\rightarrow 1$. This section demonstrates that the certain inference on the logical model is a natural model of counterfactual reasoning.
\par
Table \ref{tab:data} shows data on four football matches characterised by four attributes: $goal$, $home$, $opponent$, $win\in\{0,1\}$. They are, respectively, facts about whether our teammate Alice scored a goal or not, whether the game was played at home or not, whether the opponent was 0 (meaning Belgium) or 1 (meaning Brazil), and whether our team won or not. Now, we consider the following question.
\begin{quote}
Our team lost the home game without Alice's goal against Belgium, i.e., $m_{1}$. Would we have won if Alice had scored a goal in this match?
\end{quote}
This question does not have a straightforward answer because it is a counterfactual with respect to the data. Indeed, the set of attributes, i.e., $(goal=1,home=1,opponent=0)$, of the counterfactual does not appear in the data.
\par
As long as the counterfactual does not exist in the data, it is reasonable to realise counterfactual reasoning based on the facts most similar to the counterfactual \cite{pearl:18}. The counterfactual shares attributes $(home=1,opponent=0)$ with $m_{1}$, $(goal=1,home=0)$ with $m_{2}$, $(goal=1,opponent=0)$ with $m_{3}$ and $(goal=1)$ with $m_{4}$. The data thus indicates that $m_{1}, m_{2}$ and $m_{3}$ are most similar to the counterfactual in terms of the number of shared attributes. Since the team won in $m_{2}$ and $m_{3}$, it is reasonable to conclude that, given the counterfactual, the probability of winning is 2/3. Here, readers might think that $m_{1}$ should be excluded from the most similar facts because, in the counterfactual, we look at the situation in which Alice scored a goal. However, $m_{1}$ contains important information because it is empirically true that the probability of winning with Alice's goal is positively affected by the fact that we won without Alice's goal and negatively affected by the fact that we lost without Alice's goal.
\par
Interestingly, the idea of counterfactual reasoning is naturally modelled by the logical model. The predictive probability of winning given the counterfactual is calculated as follows.
\begin{align*}
&p(win|goal,home,\lnot opp.)\\
&=\lim_{\mu\rightarrow 1}\frac{\sum_{m}p(goal|m)p(home|m)p(\lnot opp.|m)p(win|m)p(m)}{\sum_{m}p(goal|m)p(home|m)p(\lnot opp.|m)p(m)}\\
&=\lim_{\mu\rightarrow 1}\frac{\mu^{2}(1-\mu)^{2}+\mu^{3}(1-\mu)+\mu^{3}(1-\mu)+\mu(1-\mu)^{3}}{\mu^{2}(1-\mu)+\mu^{2}(1-\mu)+\mu^{2}(1-\mu)+\mu(1-\mu)^{2}}\\
&=\frac{2}{3}
%\\
%&=\lim_{\mu\rightarrow 1}\frac{\mu(1-\mu)+\mu^{2}+\mu^{2}+(1-\mu)^{2}}{\mu+\mu+\mu+(1-\mu)}=\frac{2}{3}
\end{align*}
The denominator of the predictive probability turns out to equal the number of facts most similar to the counterfactual, i.e., $m_{1}$, $m_{2}$ and $m_{3}$, whereas the numerator turns out to equal the number of wins from the three games, i.e., $m_{2}$ and $m_{3}$. Note that only the logical model with $\mu\rightarrow 1$ successfully formalises the idea of counterfactual reasoning.
\begin{table}[t]%[tbph]
\caption{Prior distribution over four football matches.}
\label{tab:data}
\begin{center}
\begin{tabular}{c|c|cccc}
& $p(M)$ & $goal$ & $home$ & $opponent$ & $win$\\\hline
$m_{1}$ & $0.25$ & $0$ & $1$ & $0$ & $0$\\
$m_{2}$ & $0.25$ & $1$ & $1$ & $1$ & $1$\\
$m_{3}$ & $0.25$ & $1$ & $0$ & $0$ & $1$\\
$m_{4}$ & $0.25$ & $1$ & $0$ & $1$ & $0$
\end{tabular}
\end{center}
%\vspace{-1em}
\end{table}
\par
Our approach for counterfactual reasoning essentially differs from Pearl \cite{pearl:18} and Lewis \cite{lewis:73}. Our approach is data-driven, whereas Pearl's approach is model-driven in the sense that it assumes a causal diagram. Our approach is based on probability theory, whereas Lewis's approach is based on the possible-worlds semantics. Although a formal comparison is difficult, Table \ref{tab:comparison} shows that there are some counterparts between the two approaches.
\begin{table}[ht]
\caption{Correspondence with Lewis' counterfactuals.}
\centering
\begin{tabular}{|c|c|}
\hline
Lewis' counterfactuals & Our counterfactuals \\\hline
Possible worlds & Probability distribution $p(M)$\\
Our world(s) & Model(s) $\llbracket\Delta\rrbracket$\\
Most similar world(s) & Approximate model(s) $(\!(\Delta)\!)$\\
Counterfactual $\Delta>\alpha$ & Predictive distribution $p(\alpha|\Delta)$ \\\hline
% MAP $\hat{w}=\argmax_{w}p(w|\Delta)$
\end{tabular}
\label{tab:comparison}
\end{table}
%
%
%%%%%%%%%%%%%%%%%%%%%%%%%%%%%%%%%%%%%%%%%%%%%%%%%%%%%%%%%%%%%%%%%%%%%%%%%%%%%%%%%%%%%%%%%%%%%%%%%%%%%%%%%%%%%%%%%%%%%%%%%%%%%%%%%%%%%%%%%%%%%%%%%%%%%%%%%%%%%%%%%%%%%%%%%%%%%%%%%%%%%%%%%%%%%%%%%%%%%%%%%%%%%%%%%%%%%%%%%%%%%%%%%%%%%%%%%%%%%%%%%%%%%%%%%%%%%%%%%%%%%%%%%%%%%%%%%%%%%%%%%%%%%%%

\section{Conclusions and Discussion}\label{sec:discussion}
In this paper, we introduced a generative model of the logical interpretation that defines the process by which the truth values of formulae are generated probabilistically from data about states of the world. We showed that it is a theory of reasoning that deals with several reasoning problems such as statistical reasoning, logical reasoning, paraconsistent reasoning and counterfactual reasoning.
\par
One of the limitations of the current work is that it is still unclear how our generative model relates to other types of reasoning studied in AI such as nonmonotonic reasoning, abductive reasoning, predictive reasoning and practical reasoning. We will extend the logical model to deal with them in a unified approach.
%%%%%%%%%%%%%%%%%%%%%%%%%%%%%%%%%%%%%%%%%%%%%%%%%%%%%%%%%%%%%%%%%%%%%%%%%%%%%%%%%%%%%%%%%%%%%%%%%%%%%%%%%%%%%%%%%%%%%%%%%%%%%%%%%%%%%%%%%%%%%%%%%%%%%%%%%%%%%%%%%%%%%%%%%%%%%%%%%%%%%%%%%%%%%%%%%%%%%%%%%%%%%%%%%%%%%%%%%%%%%%%%%%%%%%%%%%%%%%%%%%%%%%%%%%%%%%%%%%
%% The file named.bst is a bibliography style file for BibTeX 0.99c
\bibliographystyle{named}
\bibliography{btxkido}

\end{document}